\documentclass{article}
 \usepackage[preprint]{neurips_2026}
\usepackage[utf8]{inputenc} % allow utf-8 input
\usepackage[T1]{fontenc}    % use 8-bit T1 fonts
\usepackage{hyperref}       % hyperlinks
\usepackage{url}            % simple URL typesetting
\usepackage{booktabs}       % professional-quality tables
\usepackage{amsfonts}       % blackboard math symbols
\usepackage{nicefrac}       % compact symbols for \nicefrac{1}{2}, etc.
\usepackage{microtype}      % microtypography
\usepackage{xcolor}         % colors

\usepackage{times}
\usepackage{graphicx}
\usepackage{hyperref}
\usepackage{url}
\usepackage{algorithm}
\usepackage{algorithmic}
\usepackage{amsmath}
\usepackage{amsthm}
\usepackage{amssymb}
\usepackage[capitalize,noabbrev]{cleveref}
\usepackage{appendix}
\usepackage{apptools}
\usepackage{interval}
\usepackage{subcaption}  % 删除 \usepackage{subfigure}
\usepackage{fancybox}

\theoremstyle{definition} 
\newtheorem{lemma}{Lemma}

\newtheorem{theorem}{Theorem}

\newtheorem{assumption}{Assumption}
\usepackage{booktabs}
\usepackage{threeparttable}
\usepackage{xcolor}         % colors
\usepackage{colortbl}
\usepackage{makecell}
\usepackage{bbding}
\usepackage{parskip}
\definecolor{customgray}{gray}{0.9}
\usepackage{xcolor}

\hypersetup{
    colorlinks=true,
    linkcolor=blue,
    citecolor=blue,
    urlcolor=blue
}

\usepackage{bm}

\DeclareMathOperator{\op}{op}
\DeclareMathOperator{\tr}{Tr}

\DeclareMathOperator{\diag}{diag}

\DeclareMathOperator*{\sgn}{sign}
\DeclareMathOperator*{\msgn}{msign}

\newcommand{\R}{\mathbb{R}}

\newcommand{\vLambda}{\mathbf{\Lambda}}

\newcommand{\brac}[1]{\left(#1\right)}
\newcommand{\mbrac}[1]{\left[#1\right]}
\newcommand{\sqbrac}[1]{\left[#1\right]}
\newcommand{\bbrac}[1]{\left \{ #1 \right\}}
\newcommand{\abs}[1]{\left|#1\right|}
\newcommand{\norm}[1]{\lVert#1\rVert}
\newcommand{\inner}[2]{\left\langle #1, #2 \right\rangle}

\newcommand{\Tr}[1]{\tr \left( #1 \right)}

\newcommand{\expect}[1]{\E\sqbrac{#1}}
\newcommand{\bigO}[1]{\mathcal{O}\left(#1\right)}
\newcommand\vsigma{{\bm{\sigma}}}
\newcommand\vSigma{{\bm{\Sigma}}}
\newcommand{\msign}[1]{\msgn\left(#1\right)}
\newcommand{\sign}[1]{\sgn\left(#1\right)}

\def \y {\mathbf{y}}
\def \YB {\mathbf{Y}}
\def \E {\mathbb{E}}
\def \x {\mathbf{x}}
\def \XB {\mathbf{X}}

\def \V {\mathbf{V}}
\def \g {\mathbf{g}}
\def \L {\mathbf{L}}

\def \U {\mathbf{U}}

\def \WB {\mathbf{W}}

\def \R {\mathbb{R}}

\def \PB {\mathbf{P}}
\def \F {\mathcal{F}}

\def \q {\mathbf{q}}

\def \q {\mathbf{q}}

\def \Ib {\mathbf{I}}

\def \SBB {\mathbb{S}}

\def \Gb {\mathbf{G}}

\def \Fn {\textnormal{F}}

\def \SB {\mathbf{S}}

\def \MB {\mathbf{M}}

\def \NB {\mathbf{N}}

\def \QB {\mathbf{Q}}

\def \e {\mathbf{e}}

\def \Pr    {\mathbb{P}}

\def \bSigma {\bm{\Sigma}}

\def \op {\textnormal{op}}

\def \SumD {\sum_{i=1}^d}

\def \Linf {L_{\infty}}
\def\0{\bm{0}}

\crefname{assumption}{Assumption}{Assumptions}
\crefname{lemma}{Lemma}{Lemmas}
\crefname{theorem}{Theorem}{Theorems}
\crefname{appendix}{Appendix}{Appendices}
\crefname{subtheorem}{Theorem}{Theorems}

\title{When and Why SignSGD Outperforms SGD: A Theoretical Study Based on $\ell_1$-norm Lower Bounds}

\author{
    Hongyi Tao\textsuperscript{1,*} \quad
    Dingzhi Yu\textsuperscript{1,2,*} \quad
    Lijun Zhang\textsuperscript{1,2}\\
    \textsuperscript{1}State Key Laboratory of Novel Software Technology, Nanjing University, Nanjing 210023, China\\
    \textsuperscript{2}School of Artificial Intelligence, Nanjing University, Nanjing 210023, China\\
    \texttt{221220032@smail.nju.edu.cn,\{yudz,zhanglj\}@lamda.nju.edu.cn}\\
    \textsuperscript{*}Equal Contribution
}

\begin{document}
\maketitle

\begin{abstract}
Sign-based optimization algorithms, such as SignSGD and Muon, have garnered significant attention for their remarkable performance in training large foundation models. Despite this empirical success, we still lack a theoretical understanding of when and why these sign-based methods outperform vanilla SGD. The core obstacle is that under standard smoothness and finite variance conditions, SGD is known to be minimax optimal for finding stationary points measured by $\ell_2$-norms, thereby fundamentally precluding any complexity gains for sign-based methods in standard settings. To overcome this barrier, we analyze sign-based optimizers leveraging $\ell_1$-norm stationarity, $\ell_\infty$-smoothness, and a separable noise model, which can better capture the coordinate-wise nature of signed updates. Under this distinct problem geometry, we derive matched upper and lower bounds for SignSGD and explicitly characterize the problem class in which SignSGD provably dominates SGD. Specifically, we compare the \emph{upper bound of SignSGD} with the \emph{lower bound of SGD}, illustrating that SignSGD effectively reduces the complexity by a factor of $d$ under \emph{sparse noise}, where $d$ is the problem dimension. Furthermore, we elevate this framework to the matrix domain, providing an equivalent optimal lower bound for the Muon optimizer, proving that extending the sign operator to matrices preserves this optimal scaling with dimensionality. Finally, we bridge our theoretical bounds to practice, demonstrating that the theoretical superiority of SignSGD accurately predicts its faster convergence during the pretraining of a 124M parameter GPT-2 model. Code is available at \url{https://github.com/Dingzhen230/SignSGD_Outperforms_SGD}.
\end{abstract}

\section{Introduction} 
Efficient and scalable stochastic optimization algorithms play an indispensable role in the huge success of large foundation models~\citep{devlin-etal-2019-bert, brown2020gpt3, achiam2023gpt4, touvron2023llama, team2023gemini}. Among these, sign-based optimization algorithms, such as SignSGD~\citep{bernstein2018signsgd} and Muon~\citep{jordan2024muon}, have attracted increasing focus due to substantial empirical edges over stochastic gradient descent (SGD)~\citep{robbins1951stochastic}. SignSGD leverages $1$-bit signed gradient to update the model, which naturally enjoys many favorable empirical properties in distributed environments~\citep{bernstein2019signsgd} and low-precision training regimes~\citep{yu2026stosignsgd}. Muon optimizes using the matrix sign operator, implemented via Newton--Schulz iterations~\citep{kovarik1970some,bjorck1971iterative}. Numerous empirical studies~\citep{liu2025muon,shah2025practical,wen2025fantastic,semenov2025benchmarking} have shown its consistent speedup over AdamW~\citep{loshchilov2019adamw}, and it has become a new industrial paradigm for pretraining massive-scale large language models (LLMs)~\citep{team2025kimi,kimi2026kimi2.5,zeng2025glm,zeng2026glm,cheng2026ngram,deepseekai2026deepseekv4}.

The empirical success of sign-based optimizers has motivated a fruitful line of work that attempts to justify their advantages theoretically. From an optimization theory perspective, the convergence of sign-based methods under various assumptions has been extensively studied~\citep{bernstein2018signsgd,bernstein2019signsgd,safaryan2021stochastic,sun2023momentum,jiang2025improved,yu2026signheavytails}. However, a complete theoretical understanding of why sign-based optimizers outperform SGD remains elusive. The biggest obstacle preventing any theoretical advances is that \emph{SGD is already worst-case optimal when the objective function is smooth and stochastic gradients are unbiased with bounded variance~\citep{arjevani2023lower}}. Under these standard assumptions, any first-order algorithm requires at least $\Omega(\epsilon^{-4})$ queries to find an $\epsilon$-stationary point measured by the $\ell_2$-norm, a complexity tightly matched by SGD~\citep{ghadimi2013stochastic}. Consequently, evaluating the performance of sign-based methods under the aforementioned problem class and $\ell_2$-stationary measure inherently fails to deliver any provable gains we desire. 

In this paper, we systematically investigate when and why sign-based methods can outperform SGD by delving into a new problem class and performance measure that depart from the canonical setting in~\citet{arjevani2023lower}. Specifically, instead of the traditional $\ell_2$-geometry based on $\ell_2$-smoothness, finite variance, and $\ell_2$-stationarity, we consider an $\ell_\infty$-geometry with $\ell_\infty$-smoothness, coordinate-wise finite variance, and $\ell_1$-stationarity. This geometry shift is motivated by the insight that sign-based updates are intrinsically aligned with $\ell_\infty$-geometry~\citep{balles2020geometry,bernstein2024old,xieICLR2025ellinfty,xie2025tale}. Under this alternative geometry, we provide the first rigorous characterization of a problem class in which SignSGD provably outperforms SGD.

To clarify this geometric shift, it is essential to highlight the distinction between these conditions. Standard $\ell_2$-smoothness assumes the loss landscape curves uniformly across all directions, while $\ell_\infty$-smoothness bounds the gradient change based on the maximum coordinate-wise distance, providing a hypercube-based geometric assumption that aligns with coordinate-wise, scaled updates of sign-based methods. Define $D_f(\x, \y) := f(\y) - (f(\x) + \inner{\nabla f(\x)}{\y-\x})$ as Bregman Divergence of $f$, the above two smooth model can be expressed as
\begin{align*}
    \abs{D_f(\x, \y)} \le \dfrac{L_2}{2} \norm{\y - \x}^2_2, \quad \ell_2\text{-smoothness;} \quad
    \abs{D_f(\x, \y)} \le \dfrac{\Linf}{2} \norm{\y - \x}^2_\infty, \quad \ell_\infty\text{-smoothness.}
\end{align*}
Furthermore, we adopt a separable noise model where variance is tracked independently for each coordinate via a vector $\vsigma = [\sigma_1, \sigma_2, \dots, \sigma_d]$. This fine-grained characterization is more suitable for analyzing the highly imbalanced noise prevalent in modern deep learning~\citep{sagun2017empirical,zhu19anisotropic,zhang2020whyheavytail,pan2022eigencurve,wu2022alignment,panICLR2024momentum}:
\begin{align*}
    \expect{\left.\norm{\g_{t}^b-\nabla f(\x_t)}_2^2\right|\F_{t-1}}\le \sigma^2, &\quad \text{Standard noise model;}\\
    \forall i\in[d],\ \expect{\left.\abs{\g_{t,i}^b-\nabla_i f(\x_t)}^2\right|\F_{t-1}}\le \sigma_{i}^2, &\quad \text{Separable noise model}.
\end{align*}
Under this refined geometry, we develop a clean, self-contained lower bound analysis for SignSGD. We prove an $\ell_1$-norm lower bound that matches its upper bound, thereby giving a tight characterization of its convergence rate. We further derive an $\ell_1$-norm lower bound for SGD under the same setting. Comparing the \emph{upper bound of SignSGD} with the \emph{lower bound of SGD} reveals a strict separation: when the coordinate-wise noise is sparse or highly heterogeneous, SignSGD achieves a better dimension dependence. Finally, we extend this framework to matrix optimization and establish the first lower bound for Muon measured by the nuclear norm with matching upper bounds.

The main contributions of this paper are summarized as follows.
\begin{itemize}
    \item \textbf{Tight Bounds for SignSGD:} Let $\Linf$ and $\vsigma$ denote the $\ell_\infty$-smooth Lipschitz constant and the noise variance vector, respectively. We establish that the convergence rate of SignSGD with a constant step size is:
    $
        \E \mbrac{\min_{t} \norm{\nabla f(\x_t)}_1} = \textcolor{red}{\Theta}\brac{ \sqrt{\nicefrac{L_\infty \Delta}{N}} + \brac{\nicefrac{\norm{\vsigma}_1^2L_{\infty} \Delta}{N}}^{\frac{1}{4}} }
    $.
    
    \item \textbf{Provable Dimensional Gains of SignSGD over SGD:} By comparing SignSGD’s \textit{upper bound} with SGD’s \textit{lower bound}, we prove that SignSGD achieves a strictly superior dimension dependence. We show that when the noise $\vsigma$ is sparse, SignSGD’s complexity can be $d$ times better than that of SGD. We validate these theoretical findings by demonstrating accelerated convergence during the pretraining of a 124M-parameter GPT-2 model.
    
    \item \textbf{Unified Extension to Matrix Optimizers:} We elevate our theoretical framework into the matrix domain, mapping separable operations to orthogonalized matrix steps. By applying spectral norm smoothness and matrix variance, we provide equivalent upper and lower bounds for the recently proposed Muon optimizer~\citep{jordan2024muon}, proving for the first time that Muon handles large-scale matrix parameters just as effectively as SignSGD handles flat vectors.
\end{itemize}

\section{Related Work\label{sec:related-work}}

\paragraph{Sign-based methods: vector optimizers}
The idea of using only gradient signs dates back at least to RProp~\citep{riedmiller1993rprop}, while the modern stochastic formulation was introduced by~\citet{bernstein2018signsgd}, who studied SignSGD and its momentum variant Signum for smooth non-convex optimization. Since SignSGD replaces the stochastic gradient by its coordinate-wise sign, its natural stationarity measure is $\ell_1$-norm rather than the usual $\ell_2$-norm. This geometry is also closely tied to communication efficiency, as one-bit sign updates and majority vote make SignSGD attractive in distributed learning environments~\citep{bernstein2019signsgd,jin2020stochastic,safaryan2021stochastic}.

A fundamental limitation of vanilla sign updates appears in non-smooth optimization. Even for simple convex non-smooth objectives, SignSGD can fail to converge because the discontinuity of the sign map may repeatedly select an unfavorable subgradient direction~\citep{karimireddy2019error,xiao2023stochastic}. Error feedback repairs this issue by accumulating and correcting compression errors~\citep{karimireddy2019error}. More recently, StoSignSGD~\citep{yu2026stosignsgd} takes a different route by injecting unbiased structural stochasticity directly into the sign conversion. Its coordinate-wise max-buffer controls the stochastic sign scale, and its noise level is coupled with the gradient signal. This design preserves the numerical simplicity of sign updates while resolving the non-smooth non-convergence pathology of deterministic SignSGD.

In smooth stochastic optimization, a separate line of work sharpens the convergence theory of sign-based methods through momentum-based analyses~\citep{sun2023momentum,jiang2025improved}, stochastic sign mechanisms~\citep{jin2020stochastic,safaryan2021stochastic}, variance reduction~\citep{chzhen2023signsvrg,NeurIPS:2024:Jiang}, and heavy-tailed noise analyses~\citep{kornilov2025signheavytail,yu2026signheavytails}. Sign-based vector optimizers are also closely related to adaptive methods. Several works interpret the optimization dynamics of Adam~\citep{kingma15adam} through signed or normalized update directions~\citep{balles2018dissecting,crawshaw2022robustness,kunstner2023noise,peng2025simple}, while Lion~\citep{chen2023symbolic} has emerged as a representative momentum-based sign optimizer with strong empirical performance and growing theoretical support~\citep{chenICLR2024lion,dong2024convergence,jiang2025lion,sfyraki2025lions,yu2026signheavytails}. Most relevant to our motivation,~\citet{yu2026signheavytails} demonstrate that SignSGD- and Lion-type methods can provably outperform AdamW, especially in LLM training regimes where the noise is heavy-tailed~\citep{zhang2020whyheavytail}.

\paragraph{Sign-based methods: matrix optimizers}
Matrix sign optimizers extend the sign-based philosophy from coordinate-wise vector updates to structured matrix updates. The representative example is Muon~\citep{jordan2024muon}, which applies an orthogonalized matrix sign direction: for a matrix gradient $G=U\vSigma V^\top$, the update direction is $\msign{G}=UV^\top$. In practice, this matrix sign operation is efficiently approximated by Newton--Schulz iterations~\citep{kovarik1970some,bjorck1971iterative}. Subsequent variants, such as MuonLight, incorporate Nesterov momentum, learning-rate alignment, and implementation refinements, and have been used or benchmarked in massive-scale language model training~\citep{liu2025muon,zeng2025glm,team2025kimi,wen2025fantastic,semenov2025benchmarking,deepseekai2026deepseekv4}. These works suggest that matrix-level normalization can provide optimization benefits beyond coordinate-wise sign updates.

Theoretical understanding of Muon and related matrix optimizers is still developing. Existing studies analyze Muon from several complementary perspectives, including spectral and nuclear-norm geometry, normalized matrix descent, preconditioning, connections to Adam-type methods, and empirical justifications for its superiority~\citep{li2025note,shen2025convergence,chang2025convergence,si2025adamuon,sfyraki2025lions,chen2025muon,huang2025limuon,li2025normuon,qian2025muon,tveit2025muon,page2025muonall,mehta2025muon,vasudeva2025muon,vasudeva2025the,frans2025really,pan2025unbiased,wang2025muon,zhang2025provable,zhang2025adagrad,su2025isotropic,crawshaw2025exploration,ma2026preconditioning,du2026newton}. 

\paragraph{Lower complexity bounds}  
Complexity lower bounds are the standard tool for determining whether an apparent algorithmic improvement is genuine or merely an artifact of analysis. The oracle-complexity viewpoint goes back to the classical framework of~\citet{nemirovskij1983problem}, and was later developed extensively for stochastic convex optimization~\citep{agarwal2012information}. These results identify how noise, dimension, and geometry constrain the best possible rates in stochastic first-order optimization. 

For smooth non-convex optimization, the classical target is an $\epsilon$-stationary point measured by $\|\nabla f(x)\|_2$. In the deterministic setting, \citet{carmon2020lower,carmon2021lower} established sharp lower bounds for finding stationary points of smooth high-dimensional functions, while~\citet{chewi2023complexity} showed that even in one dimension the query complexity depends delicately on whether the algorithm is deterministic or randomized and whether it can access first-order or zeroth-plus-first-order information. In the stochastic setting,~\citet{ghadimi2013stochastic} gave the standard $O(\epsilon^{-4})$ upper bound for SGD under unbiased bounded-variance gradients, and subsequent lower-bound work showed that this rate is essentially unavoidable. In particular,~\citet{drori2020complexity} studied worst-case lower bounds for SGD itself, and~\citet{arjevani2023lower} proved that any stochastic first-order method requires $\Omega(\epsilon^{-4})$ stochastic-gradient queries under the standard bounded-variance model, with a corresponding $\Omega(\epsilon^{-3})$ lower bound under mean-squared smoothness. Thus, in the usual $\ell_2$-geometry, SGD is minimax optimal, and one should not expect a worst-case improvement without changing the geometry, the oracle model, or the stationarity criterion.  
  
A more recent line of work asks whether adaptive or sign-based methods can provably improve over SGD once the problem structure is refined.~\citet{jiang2024convergence} show that this is possible for AdaGrad under coordinate-wise smoothness and coordinate-wise noise assumptions: by adopting an $\ell_1$-stationarity measure, they prove both an AdaGrad upper bound and an SGD lower bound, yielding regimes where AdaGrad improves over SGD by a factor of the dimension. Complementarily,~\citet{crawshaw2025complexity} study lower bounds for adaptive gradient algorithms under $(L_0,L_1)$-smoothness, showing that several AdaGrad variants necessarily incur higher-order dependence on the relaxed-smoothness parameters. 

\section{SignSGD: Upper Bound and Lower Bound\label{sec:signsgd-bounds}}

In this section, we give the first tight characterization of the convergence of SignSGD.

\subsection{Notations and Assumptions}

We write $[T]$ for $\{1,2,\dots,T\}$, $\abs{\x}$ for element-wise absolute value of $\x\in\R^d$. The $\L$-weighted vector norm for $\L \in\R_{+}^{d}$ is defined as $\norm{\x}_{\L}^{2}:=\x^{\top}\diag(\L)\x$. For vector descent methods, we study the optimization problem $\min_{\x\in\R^d}f(\x)$, where $f:\R^d \to \R$ is differentiable. Given a point $\x\in\R^d$, we can only access the gradient $\nabla f(\x)=[\nabla_1f(\x),\cdots,\nabla_df(\x)]\in\R^d$ in a noisy manner which we will later define in \cref{ass:variance}.  We use $\sign{\cdot}$ to denote the sign operator, and WLOG, use the convention $\sign{0} = 0$, which our lower bound analysis may encounter. Below, we list some necessary assumptions.

\renewcommand{\theassumption}{\arabic{assumption}a}
\renewcommand{\theHassumption}{\arabic{assumption}a}
\setcounter{assumption}{0} 

\begin{assumption}[Lower bounded objective]
\label{ass:bounded from below}
The function $f$ is bounded from below. There exists $f^* > -\infty$ such that $f(\x) \geq f^*$, for all $\x\in \mathbb{R}^{d}$. We further denote $\Delta = f(\x_0) - \inf_{\x\in\R^d} f(\x)$.
\end{assumption}

\begin{assumption}[Separable noise model]
\label{ass:variance}
At step $t$ we observe a mini-batch of mutually independent gradients $g_{t}=\{\g_{t}^{1},\cdots,\g_{t}^{B}\}$ satisfying $\E\sqbrac{\g_t^b|\F_{t-1}}=\nabla f(\x_t),\forall b\in[B]$ where $\mathcal{F}_{t}=\sigma(g_{1},\dots,g_{t})$ denotes the natural filtration. Moreover, the coordinate-wise conditional variance is bounded separably: there exist non-negative constants $\vsigma = [\sigma_1, \sigma_2, \cdots , \sigma_d]$ such that
$$
\expect{\left.\abs{\g_{t,i}^b-\nabla_i f(\x_t)}^2\right|\F_{t-1}}\le \sigma_{i}^2,\quad\forall i\in[d],\forall b\in[B].
$$
\end{assumption}
\cref{ass:bounded from below} is standard and necessary for stochastic non-convex optimization~\citep{arjevani2023lower}. \cref{ass:variance} is widely used in analysis of adaptive methods like AdaGrad~\citep{duchi2011adaptive}, and sign-based methods like SignSGD~\citep{bernstein2018signsgd,JMLR:v26:24-0523,jiang2024convergence,yu2026signheavytails}.

\begin{assumption}[$\ell_\infty$-smoothness]
\label{ass:infty smoothness}
The objective function $f$ is $\ell_\infty$-smooth if there exists non-negative constant $\Linf$ such that for all $\x, \y \in \R^d$, we have
$$
\abs{f(\y) - (f(\x) + \inner{\nabla f(\x)}{\y-\x})} \le \dfrac{\Linf}{2} \norm{\y - \x}^2_\infty.
$$
\end{assumption}
\begin{assumption}[Separable smoothness]
\label{ass:separable smoothness}
The objective function $f$ is $\L$-separable smooth if there exists a non-negative vector $\L = [L_1, L_2, \dots, L_d]$ such that for all $\x, \y \in \R^d$, we have
$$
\abs{f(\y) - (f(\x) + \inner{\nabla f(\x)}{\y-\x})} \le \dfrac{1}{2}\norm{\y - \x}^2_\L.
$$
\end{assumption}
Separable smoothness (\cref{ass:separable smoothness}) is widely used in analyses of SignSGD methods~\citep{bernstein2018signsgd,bernstein2019signsgd,safaryan2021stochastic,yu2026signheavytails}, which naturally aligns with sign descent methods due to their separable nature.~\citet{balles2020geometry} further discussed the relationship between \cref{ass:infty smoothness,ass:separable smoothness}, pointing out that the latter is a more generalized assumption. We present the following lemma to formalize this claim, whose proof can be found in \cref{pf:smooth-Equivalence}.

\begin{lemma}\label{lem:smooth-equivalence}
    If $f$ is $\L$-separable smooth, then it is also $\ell_\infty$-smooth with $\Linf = \norm{\L}_1$.
\end{lemma}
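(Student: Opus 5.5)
The plan is to compare the two quadratic upper bounds on the Bregman divergence pointwise. Nothing beyond the definitions is needed, because both \cref{ass:infty smoothness} and \cref{ass:separable smoothness} control the same quantity, $\abs{D_f(\x,\y)}$, with $D_f(\x, \y) = f(\y) - (f(\x) + \inner{\nabla f(\x)}{\y-\x})$. So it is enough to show that for every $\mathbf{z}\in\R^d$,
\begin{align*}
    \norm{\mathbf{z}}_\L^2 \le \norm{\L}_1 \norm{\mathbf{z}}_\infty^2 ,
\end{align*}
and then apply this with $\mathbf{z} = \y - \x$.

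To prove the inequality, expand the weighted norm: $\norm{\mathbf{z}}_\L^2 = \mathbf{z}^\top \diag(\L)\mathbf{z} = \sum_{i=1}^d L_i z_i^2$. Each $L_i \ge 0$ and $z_i^2 \le \norm{\mathbf{z}}_\infty^2$, so every term satisfies $L_i z_i^2 \le L_i \norm{\mathbf{z}}_\infty^2$. Summing over $i$ gives $\sum_{i=1}^d L_i z_i^2 \le \brac{\sum_{i=1}^d L_i}\norm{\mathbf{z}}_\infty^2 = \norm{\L}_1 \norm{\mathbf{z}}_\infty^2$. The last equality holds because $\L$ is nonnegative, so $\sum_{i=1}^d L_i$ is exactly $\norm{\L}_1$. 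This step is where the nonnegativity assumption on $\L$ is actually used.

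Now combine this with \cref{ass:separable smoothness}. For all $\x,\y\in\R^d$,
\begin{align*}
    \abs{D_f(\x,\y)} \le \frac{1}{2}\norm{\y-\x}_\L^2 \le \frac{\norm{\L}_1}{2}\norm{\y-\x}_\infty^2 .
\end{align*}
This is exactly \cref{ass:infty smoothness} with $\Linf = \norm{\L}_1$, which proves the lemma.

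There is no real obstacle here. The only thing to check is that the absolute value on the Bregman divergence appears in both assumptions, so that the two-sided bound transfers directly.
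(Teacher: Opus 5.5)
Your proof is correct and follows essentially the same argument as the paper. Both expand $\norm{\mathbf{y}-\mathbf{x}}_\L^2 = \sum_{i=1}^d L_i (y_i - x_i)^2$, bound each squared coordinate by $\norm{\mathbf{y}-\mathbf{x}}_\infty^2$ using $L_i \ge 0$, and sum to obtain the constant $\norm{\L}_1$.
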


\subsection{Upper Bound Theory of SignSGD}
For completeness, the whole algorithmic procedure of SignSGD is listed in \cref{alg:ssgd}. Though the upper bounds of SignSGD and its variants under various settings have been widely studied~\citep{bernstein2018signsgd,karimireddy2019error,sun2023momentum,NeurIPS:2024:Jiang,jiang2025improved,jiang2025lion,yu2026signheavytails}, we present the upper bound analysis here for consistency of the paper. 

\begin{theorem}
\label{thm:ssgd_upper_infty}
    Run \cref{alg:ssgd} for $T$ iterations under \cref{ass:infty smoothness,ass:variance,ass:bounded from below}, by setting the hyperparameters as:
    \begin{equation}\label{eq:hyp_ssgd}
        \eta = \sqrt{\dfrac{2\Delta}{\Linf T}},\quad B = \max \bbrac{1, \dfrac{\norm{\vsigma}_1^2}{\Delta \Linf}T}.
    \end{equation}
    Denote $N = BT$ as the total complexity, \cref{alg:ssgd} guarantees:
    \begin{equation}\label{eq:convergence-rate-sign-infty}
        \expect{\dfrac{1}{T}\sum_{t=1}^T\norm{\nabla f(\x_t)}_1} 
        = \bigO{
        {\sqrt{\dfrac{\Linf\Delta}{N}}} + \brac{\dfrac{\norm{\vsigma}_1^2\Linf \Delta}{N}}^{\frac{1}{4}}
        }.
    \end{equation}
\end{theorem}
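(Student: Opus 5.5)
The plan is the standard one-step descent argument for SignSGD under $\ell_\infty$-smoothness, followed by telescoping. We assume \cref{alg:ssgd} updates $\x_{t+1} = \x_t - \eta\,\sign{\bar\g_t}$, where $\bar\g_t = \frac{1}{B}\sum_{b=1}^B \g_t^b$ is the mini-batch average.

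\textbf{Descent step.} By \cref{ass:infty smoothness}, and since $\norm{\sign{\bar\g_t}}_\infty \le 1$,
\begin{equation*}
f(\x_{t+1}) \le f(\x_t) - \eta \inner{\nabla f(\x_t)}{\sign{\bar\g_t}} + \frac{\Linf \eta^2}{2}.
\end{equation*}
The inner product is handled coordinatewise. If $\sign{\bar g_{t,i}} = \sign{\nabla_i f(\x_t)}$, its contribution is $\abs{\nabla_i f(\x_t)}$. Otherwise the contribution is at least $-\abs{\nabla_i f(\x_t)}$, and a sign mismatch forces $\abs{\bar g_{t,i} - \nabla_i f(\x_t)} \ge \abs{\nabla_i f(\x_t)}$. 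This gives the deterministic inequality
\begin{equation*}
\inner{\nabla f(\x_t)}{\sign{\bar\g_t}} \ge \norm{\nabla f(\x_t)}_1 - 2\norm{\bar\g_t - \nabla f(\x_t)}_1 .
\end{equation*}
The convention $\sign{0}=0$ only makes the bad case milder. It is better to use this pathwise bound than a Markov bound on the probability of a sign flip, because it yields the $\ell_1$ norm of the noise directly.

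\textbf{Noise term.} Taking conditional expectation and applying Jensen coordinatewise, the independence and unbiasedness in \cref{ass:variance} give $\E[\abs{\bar g_{t,i} - \nabla_i f(\x_t)} \mid \F_{t-1}] \le \sigma_i/\sqrt{B}$. Summing over coordinates yields a bound of $\norm{\vsigma}_1/\sqrt{B}$. Telescoping over $t\in[T]$ with \cref{ass:bounded from below} then gives
\begin{equation*}
\expect{\frac{1}{T}\sum_{t=1}^T \norm{\nabla f(\x_t)}_1} \le \frac{\Delta}{\eta T} + \frac{\Linf \eta}{2} + \frac{2\norm{\vsigma}_1}{\sqrt{B}} .
\end{equation*}

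\textbf{Parameter substitution.} The choice $\eta = \sqrt{2\Delta/(\Linf T)}$ makes the first two terms $O(\sqrt{\Linf\Delta/T})$. The batch size is then handled in two cases.

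\emph{Case $B=1$.} Then $\norm{\vsigma}_1^2 T \le \Delta\Linf$, so $\norm{\vsigma}_1 \le \sqrt{\Delta\Linf/T}$. With $N=T$, the whole bound is $O(\sqrt{\Linf\Delta/N})$.

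\emph{Case $B>1$.} Then $B = \norm{\vsigma}_1^2 T/(\Delta\Linf)$, so $\norm{\vsigma}_1/\sqrt{B} = \sqrt{\Delta\Linf/T}$, and the total bound is $O(\sqrt{\Linf\Delta/T})$. Expressing this through $N = BT = \norm{\vsigma}_1^2 T^2/(\Delta\Linf)$ gives $T = \sqrt{N\Delta\Linf}/\norm{\vsigma}_1$. Hence
\begin{equation*}
\sqrt{\Linf\Delta/T} = \brac{\frac{\norm{\vsigma}_1^2 \Linf\Delta}{N}}^{1/4}.
\end{equation*}

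Combining the two cases gives \eqref{eq:convergence-rate-sign-infty}. Ignoring the ceiling on $B$ costs only a constant factor.

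The main obstacle is the sign-mismatch step: getting a bound linear in $\norm{\bar\g_t - \nabla f(\x_t)}_1$ (rather than quadratic) that holds without any unimodality or symmetry assumption on the noise. The pathwise triangle-inequality argument above is what resolves it. The rest is bookkeeping to convert the rate in $T$ into one in $N$.
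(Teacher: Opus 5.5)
Your proposal is correct and follows the paper's proof essentially step for step. The shared steps are the $\ell_\infty$-smoothness descent inequality, the pathwise sign-mismatch bound giving $2\eta\norm{\nabla f(\x_t)-\g_t}_1$, Jensen to get $\norm{\vsigma}_1/\sqrt{B}$, telescoping, and the two-case choice of $B$. Your case split matches the stated $\max\{1,\cdot\}$ batch-size rule exactly, and you handle integrality of $B$, which makes it slightly cleaner than the paper's.
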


\begin{figure}[t]
    \begin{minipage}[t]{0.49\textwidth}
        \begin{algorithm}[H]
        \caption{SignSGD~\citep{bernstein2018signsgd}}
        \label{alg:ssgd}
        \begin{algorithmic}[1]
        \STATE \textbf{Require} iteration number $T$ , initial point $\x_0$, batch size $B$, learning rate $\eta$
        \FOR {step $t = 0$ {\bf to} $T-1$}
            \STATE Compute stochastic gradient: $\g_t = \nicefrac{1}{B}\sum_{b=1}^B \g_{t}^{b}$ 
            \STATE $\x_{t+1} = \x_{t} - \eta \sign{\g_t}$
        \ENDFOR
        \end{algorithmic}
        \end{algorithm}
    \end{minipage}
    \hfill
    \begin{minipage}[t]{0.49\textwidth}
        \begin{algorithm}[H]
        \caption{Muon~\citep{jordan2024muon}}
        \label{alg:muon}
        \begin{algorithmic}[1]
        \STATE \textbf{Require} iteration number $T$ , initial point $\WB_0$, batch size $B$, learning rate $\eta$.
        \FOR {step $t = 0$ {\bf to} $T-1$}
            \STATE Compute stochastic gradient: $\Gb_t = \nicefrac{1}{B}\sum_{b=1}^B \Gb_t^b$ 
            \STATE $\WB_{t+1} = \WB_{t} - \eta \msign{\Gb_t}$
        \ENDFOR
        \end{algorithmic}
        \end{algorithm}
    \end{minipage}
\end{figure}

\subsection{Lower Bound Theory of SignSGD}
After establishing an upper bound for \cref{alg:ssgd}, we present a corresponding lower bound under the same conditions, which immediately verifies the sharpness of the upper bound in \cref{thm:ssgd_upper_infty}.

\begin{theorem}
\label{thm:ssgd_lower_infty}
Fix $T \ge 1$ and a scaling parameter $\eta>0$, and consider running \cref{alg:ssgd} for $T$ iterations with batch size $B$. For any given parameters $\Linf$, $\vsigma$, and $\Delta$, there exists a function $f:\R^d\to\R$ and a stochastic gradient oracle such that:
\begin{enumerate}
    \item $f$ satisfies \cref{ass:infty smoothness,ass:bounded from below};
    \item the stochastic gradients $\g_t$ satisfy \cref{ass:variance};
    \item denote $N = BT$, the iterates generated by \cref{alg:ssgd} satisfy
    \begin{equation}
    \label{eq:lower_signsgd}
        \expect{\min_{0\le t<T} \norm{\nabla f(\x_t)}_1}
        =
        \Omega\brac{
            \sqrt{\frac{\Linf \Delta}{N}}
            +
            \brac{\frac{\norm{\vsigma}_1^2 \Linf \Delta}{N}}^{\frac{1}{4}}
        } .
    \end{equation}
\end{enumerate}
\end{theorem}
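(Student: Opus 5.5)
Since the statement is a lower bound for the specific algorithm SignSGD with a fixed step size $\eta$ and batch size $B$, I would build the hard instance after seeing $\eta$, $T$ and $B$. It suffices to prove each of the two terms in \eqref{eq:lower_signsgd} separately, since $\max\{a,b\}\ge (a+b)/2$. For every $\eta$, I construct a function and oracle on which SignSGD either moves too slowly or oscillates. The worst case over $\eta$ then gives the rate. Throughout I use separable constructions $f(\x)=\sum_i f_i(x_i)$ and make sure \cref{ass:infty smoothness} holds. By \cref{lem:smooth-equivalence} it suffices to have separable smoothness with $\norm{\L}_1\le \Linf$.

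\textbf{Deterministic term $\sqrt{\Linf\Delta/N}$.} Take zero noise and $B=1$, so $N=T$. Batching only hurts here, since it inflates $N$ without changing the iterates. The case split is on $\eta$.

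If $\eta$ is small, say $\eta\le \sqrt{\Delta/(\Linf T)}$, use the linear-then-quadratic (Huber) function $f(\x)=\sum_i h(x_i)$ with slope $c=\sqrt{\Linf\Delta/T}/d$ in each coordinate. Each coordinate has curvature $\Linf/d$, so $\norm{\L}_1=\Linf$. The iterates move by at most $\eta T$ per coordinate, so $\Delta\approx dc\,\eta T$ stays admissible while the gradient keeps $\ell_1$-norm $dc=\sqrt{\Linf\Delta/T}$ until the region is exited.

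If $\eta$ is large, use a quadratic $f(\x)=\frac{\Linf}{2d}\norm{\x-\x^*}_2^2$ with $\x_0-\x^*=(\eta/2)\mathbf 1$. SignSGD then oscillates between $\pm\eta/2$ forever, so $\norm{\nabla f(\x_t)}_1=\Linf\eta/2\ge\sqrt{\Linf\Delta/T}$. The initial gap is $\Linf\eta^2/8$. If this exceeds $\Delta$, I instead take a single coordinate with amplitude chosen so the gap equals $\Delta$, or rescale the quadratic so the gap is exactly $\Delta$; this still gives $\ell_1$-norm at least $\sqrt{\Linf\Delta}\ge\sqrt{\Linf\Delta/T}$.

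\textbf{Noise term $(\norm{\vsigma}_1^2\Linf\Delta/N)^{1/4}$.} Here I use a coordinate-wise noisy-sign construction. Give coordinate $i$ a linear piece with slope $a_i$, proportional to $\sigma_i$ scaled by a factor $\lambda$. The oracle returns $\nabla_i f\pm$ a two-point noise of variance $\sigma_i^2$, for instance values $-\sigma_i/p$ w.p.\ $p$ and something positive otherwise. Averaged over $B$ samples, the probability of a correct sign is then at most $1/2+O(a_i\sqrt B/\sigma_i)$. Consequently the expected decrease per step in coordinate $i$ is at most about $\eta a_i\cdot a_i\sqrt B/\sigma_i$, giving a total decrease over $T$ steps of roughly $\eta T\sqrt B\sum_i a_i^2/\sigma_i$. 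I choose $a_i=\lambda\sigma_i$, so that $\norm{\nabla f}_1=\lambda\norm{\vsigma}_1$ persists as long as progress is bounded by $\Delta$. That requires $\eta T\sqrt B\lambda^2\norm{\vsigma}_1\lesssim\Delta$.

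Combining this with the two regimes of $\eta$ from the deterministic part (small $\eta$: linear slopes persist; large $\eta$: quadratic oscillation with $\ell_1$ gradient $\gtrsim\Linf\eta$) and optimizing, the bound is smallest at the balance $\Linf\eta\approx\lambda\norm{\vsigma}_1$. That balance yields $\lambda^4\norm{\vsigma}_1^4\approx\Linf\Delta\norm{\vsigma}_1^2/(T\sqrt B)$, which is at least $\norm{\vsigma}_1^2\Linf\Delta/N$ since $T\sqrt B\le TB$. To make a single instance cover both regimes, I would combine the two constructions on disjoint coordinates, or choose between them depending on $\eta$, which is allowed.

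\textbf{Main obstacle.} The main obstacle is making the noisy-sign argument rigorous with the $\min_t$ inside the expectation. I need the $\ell_1$ gradient to stay large for all $t$ with constant probability, not merely on average. I would handle this by keeping the linear region wide, so that gradients are constant inside it, and bounding the exit probability via Markov's inequality on the displacement. I must also check that the bias of the sign of a mean of $B$ two-point variables is $O(a\sqrt B/\sigma)$, using a Berry–Esseen or anticoncentration bound, and that the $\sign{0}=0$ convention does not break the oscillation constructions.
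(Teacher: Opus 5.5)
Your deterministic part is sound, and it reaches the bound by a different route from the paper's. You split on $\eta$: a Huber ramp for $\eta\le\sqrt{\Delta/(\Linf T)}$, and an oscillating quadratic otherwise. The paper instead uses one $\eta$-adapted staircase (\cref{lemma:ssgd:1}). Its derivative is $-\epsilon$ at every grid point $x_1+k\eta$, and its value changes by $\eta^2/4-\eta\epsilon\ge-\epsilon^2$ per grid step, so each step costs at most $\epsilon^2$ (for unit smoothness) whatever $\eta$ is. One correction for the case $\Linf\eta^2/8>\Delta$: of your two fixes, rescaling the curvature does not work, because it leaves gradient $4\Delta/\eta$, which can be arbitrarily small. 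The asymmetric start $x_0-x^*=\sqrt{2\Delta/\Linf}$ does work: the iterates alternate between $x_0$ and $x_0-\eta$, and the gradient stays at least $\sqrt{2\Linf\Delta}$.

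The noise term has a genuine gap, starting with the balance computation. Substituting $\eta\approx\lambda\norm{\vsigma}_1/\Linf$ into $\eta T\sqrt{B}\lambda^2\norm{\vsigma}_1\approx\Delta$ gives $\lambda^3\norm{\vsigma}_1^3\approx\Linf\Delta\norm{\vsigma}_1/(T\sqrt{B})$, not a fourth-power relation. So your accounting certifies only $(\Linf\Delta\norm{\vsigma}_1/(T\sqrt{B}))^{1/3}$. Comparing twelfth powers, this is at least $(\norm{\vsigma}_1^2\Linf\Delta/(BT))^{1/4}$ if and only if $B\ge\norm{\vsigma}_1^2T/(\Linf\Delta)$. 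For smaller batches the argument falls short, for example $B=1$ with $\Linf\Delta/T<\norm{\vsigma}_1^2<\Linf\Delta T$, where the noise term is the dominant one.

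This is structural, not a bookkeeping slip. An oracle that makes the mini-batch sign a near-fair coin produces drift linear in the signal-to-noise ratio $a\sqrt{B}/\sigma$. The target rate instead needs the per-step chance of progress to be quadratic, of order $Ba^2/\sigma^2$. A near-fair coin also makes the iterate diffuse by about $\eta\sqrt{T}$ in both directions, and Markov's inequality on a signed displacement does not control that. So the obstacle you flagged is real.

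The missing idea is the paper's stall oracle. In one dimension it returns $g_t^b=0$ with probability $\sigma^2/(\sigma^2+4\epsilon^2)$ and $g_t^b=\frac{\sigma^2+4\epsilon^2}{4\epsilon^2}p'(x_t)$ otherwise. This is unbiased, with variance exactly $\sigma^2$ where $p'=-2\epsilon$. Since $\sign{0}=0$, the mini-batch mean either has the correct sign or is exactly zero, so SignSGD either advances one grid point or stays put.

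Two consequences follow. The trajectory is monotone, and the number of moves $M$ satisfies $\E[M]\le 4\epsilon^2N/(\sigma^2+4\epsilon^2)$, which is quadratic in the signal-to-noise ratio. Markov's inequality then gives $M\le n$ with probability at least $1/2$. On that event every iterate lies on the staircase grid with $|p'|=2\epsilon$. This handles the $\min$ inside the expectation and yields $(L\Delta\sigma^2/N)^{1/4}$ in one dimension. The paper lifts this to $\norm{\vsigma}_1$ using $\min_t\sum_i\ge\sum_i\min_t$ and the allocation $L_i\propto\sigma_i$, $\Delta_i\propto\sigma_i$.

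This oracle would also rescue your own case split. With slope $a$, the expected number of moves is at most $Na^2/\sigma^2$, each costing $a\eta$. Balancing $\eta Na^3/\sigma^2\approx\Delta$ against $\Linf\eta\approx a$ gives $a^4\approx\Linf\Delta\sigma^2/N$, which is the target rate.
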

To the best of our knowledge, \cref{thm:ssgd_lower_infty} is the first known tight lower bound for the SignSGD algorithm that does not depend on the dimension $d$ explicitly. We briefly illustrate our proof techniques below, with the full analysis postponed to \cref{sec:proof-ssgd-lowerbound}. 

\begin{proof}[Proof Sketch]
    To analyze the complexity of SignSGD, we divide the proof into five main steps.

    \textbf{Step 1: Dimensional Decomposition.} We first reduce the $d$-dimensional optimization into $d$ parallel one-dimensional problems by constructing a separable objective function $f(\x) = \SumD p_i(\x_i)$, where $\x_i$ denotes the $i$-th coordinate. Utilizing \cref{lem:smooth-equivalence}, we can safely distribute the global $\ell_\infty$-smoothness constant $\Linf$ and the suboptimality $\Delta$ across coordinates such that $\norm{\L}_1 = \Linf$ and $\SumD \Delta_i = \Delta$.

    \textbf{Step 2: Constructing a 1D Resisting Oracle.} We establish the worst-case lower bound for a single coordinate by constructing a ``resisting oracle''~\citep{nesterov2018lectures}. Specifically, in the deterministic setting, we design a hard 1D function $p_i$ that maintains a constant slope $p_i'(x) = -\epsilon$ across a sequence of $N$ query points. Consequently, any algorithm that fails to escape this predefined region is strictly prevented from finding an $\epsilon$-stationary point, as the gradient magnitude is uniformly bounded away from zero.

    \textbf{Step 3: Inducing Stall via Adversarial Bimodal Noise.} To extend the 1D construction to the stochastic setting, we introduce an adversarial noise distribution. We define a specialized gradient oracle 
    \begin{equation}\label{eq:noise}
        \Pr \brac{g_t^b = 0 \mid x_t} = \frac{\sigma_i^2}{\sigma_i^2 + \epsilon^2}, \quad \Pr \brac{g_t^b = \frac{\sigma_i^2 + \epsilon^2}{\epsilon^2} p_i'(x_t) \mid x_t} = \frac{\epsilon^2}{\sigma_i^2 + \epsilon^2}.
    \end{equation}
    This oracle is unbiased and has an exact variance of $\sigma_i^2$. Crucially, this specific noise structure is designed to force the optimizer to stall, so that more total steps are needed to escape from previously defined ``resisting oracle'' when the variance $\sigma_i^2$ is large. We yield a rigorous 1D complexity lower bound under stochastic setting as:
    \begin{align*}
        \expect{\min_t |p'(x_t)|} \ge C\max \left\{\sqrt{\dfrac{L\Delta}{N}}, \brac{\dfrac{L\Delta\sigma^2}{N}}^{\frac{1}{4}}\right\}.
    \end{align*}
    \textbf{Step 4: Dimensional Lifting and Adversarial Tuning.} Finally, we aggregate the 1D lower bounds across all $d$ dimensions. By adversarially tuning the coordinate-wise smoothness ($L_i \propto \sigma_i$) and the suboptimality ($\Delta_i$) subject to their global constraints, we maximize the aggregated lower bound. This step successfully lifts the 1D result to the $d$-dimensional setting and recovers the exact dependency on $\norm{\vsigma}_1$ as stated in the theorem.
\end{proof}

\section{Why Sign Operator Works: Comparison Between SignSGD and SGD}\label{sec:compare}

In this section, we compare SignSGD with SGD from both theoretical and empirical perspectives. This comparison highlights the role of sign descent beyond its classical interpretation as a gradient-compression mechanism~\citep{bernstein2018signsgd}: under \cref{ass:infty smoothness}, the sign operator can also lead to provably improved complexity bounds.

\subsection{Theoretical Study: SignSGD Converges Provably Faster than SGD}\label{sec:theo_improve}
We introduce the following $\ell_1$-norm lower bound for SGD, whose proof can be found in \cref{sec:proof-sgd-lower-bound}. This result characterizes the intrinsic hardness faced by SGD under our $\ell_\infty$-smooth geometry, and therefore serves as the key benchmark for comparing against the upper bound of SignSGD.
\begin{theorem}[SGD lower bound]
\label{thm:sgd_lower_infty}
Fix $T \ge 1$ and a scaling parameter $\eta>0$, and consider running vanilla SGD for $T$ iterations with batch size $B=1$. For any given parameters $\Linf$, $\vsigma$, and $\Delta$, there exists a function $f:\R^{d} \to \R$ and a stochastic gradient oracle such that:
\begin{enumerate}
    \item $f$ satisfies \cref{ass:infty smoothness,ass:bounded from below};
    \item the stochastic gradients $\g_t$ satisfy \cref{ass:variance};
    \item the iterates generated by vanilla SGD satisfy
    \begin{equation}\label{eq:lower_sgd_infty}
        \expect{\min_{t\in[T]} \norm{\nabla f(\x_t)}_1} = 
        \Omega \brac{ 
            \sqrt{\dfrac{d \Linf \Delta}{T}} + 
            \brac{
                \dfrac{d \norm{\vsigma}_2^2 \Linf \Delta}{T}
            }^{\frac{1}{4}} 
        }.
\end{equation}
\end{enumerate}
\end{theorem}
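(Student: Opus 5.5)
The plan is to reuse the separable construction from the proof of \cref{thm:ssgd_lower_infty}, with the adversarial allocation tuned against SGD rather than SignSGD.

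\textbf{Decomposition.} Take $f(\x)=\SumD p_i(\x_i)$, where each $p_i$ is $L_i$-smooth with suboptimality $\Delta_i$. By \cref{lem:smooth-equivalence}, choosing $\norm{\L}_1=\Linf$ and $\SumD\Delta_i=\Delta$ gives \cref{ass:infty smoothness,ass:bounded from below}. The noise is coordinate-wise independent with variance $\sigma_i^2$, so \cref{ass:variance} holds.

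Vanilla SGD $x_{t+1,i}=x_{t,i}-\eta g_{t,i}$ uses the same $\eta$ on every coordinate. This is the structural fact to exploit. Unlike SignSGD, whose step $\eta$ is independent of the gradient scale, SGD's effective behaviour on coordinate $i$ depends on the product $\eta L_i$.

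I will make all coordinates identical: $L_i=\Linf/d$ and $\Delta_i=\Delta/d$. Any single $\eta$ then faces the same 1D problem in every coordinate. Also, $\norm{\nabla f}_1=\SumD|p_i'|$. For the noise term I put the variances on the coordinates as given. Since the hard instance is symmetric in $L$ and $\Delta$, the bound will depend on the $\sigma_i$ only through $\SumD\sigma_i^2$, yielding $\norm{\vsigma}_2^2$.

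\textbf{1D lower bound for SGD with a fixed step.} For a 1D $L$-smooth function with gap $\Delta_0$ and noise $\sigma$, I aim to show that any fixed $\eta$ gives
\[
\expect{\min_t|p'(x_t)|}\gtrsim\sqrt{L\Delta_0/T}+(L\Delta_0\sigma^2/T)^{1/4}.
\]
I will split into cases on $\eta$.

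\emph{(a) $\eta>c/L$.} Use a quadratic $p(x)=\frac L2x^2$, or a quadratic piece placed where the iterate diverges or oscillates, so that $|p'(x_t)|$ stays at least of order $\sqrt{L\Delta_0}$. With $\eta L\ge 2$ the iterates do not contract.

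\emph{(b) $\eta\le c/L$, deterministic term.} Use a linear function of slope $-\epsilon$ extended over a region longer than $T\eta\epsilon$. The iterates travel at most $T\eta\epsilon$, which costs $\Delta_0\ge T\eta\epsilon^2$. Setting $\epsilon=\sqrt{\Delta_0/(T\eta)}\ge\sqrt{L\Delta_0/(cT)}$ gives the first term.

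\emph{(c) Noise term.} Use the bimodal oracle \eqref{eq:noise} on the same linear resisting function. The displacement is unchanged in expectation, but the variance of the position grows like $T\eta^2\sigma^2$. If this spread is $\gtrsim\Delta_0/\epsilon$, a quadratic well placed where the mass lands, in the style of Arjevani et al., forces a large gradient. Otherwise the region need only cover $T\eta\epsilon+\sqrt{T}\eta\sigma$ and the linear argument applies. Balancing $\eta$ between cases (b) and (c) should give $\epsilon\asymp(L\Delta_0\sigma^2/T)^{1/4}$.

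\textbf{Lifting.} With $L_i=\Linf/d$ and $\Delta_i=\Delta/d$, summing the deterministic term over $d$ coordinates gives $d\sqrt{\Linf\Delta/(d^2T)}$. That is only $\sqrt{\Linf\Delta/T}$, so identical allocation loses a factor $\sqrt d$.

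The fix is to exploit the shared $\eta$ across heterogeneous coordinates. Put all curvature $\Linf$ on one coordinate and a tiny curvature on the remaining $d-1$, which carry the suboptimality $\Delta$. This forces $\eta\le c/\Linf$, or else coordinate 1 diverges. With $\eta\le c/\Linf$, the linear regions on the other coordinates give $\epsilon_i^2\approx\Delta_i/(T\eta)\ge\Linf\Delta_i/T$. Hence
\[
\textstyle\sum_i\epsilon_i=\sum_i\sqrt{\Linf\Delta/(dT)}=\sqrt{d\Linf\Delta/T}.
\]
This is the claimed $\sqrt d$ gain, and it is legitimate because those coordinates' linear pieces need only be nearly flat in curvature.

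Similarly, for the noise term I distribute $\Delta_i\propto\sigma_i^2$-weighted so as to maximize $\sum_i(\Linf\Delta_i\sigma_i^2/T)^{1/4}$. The guaranteed quantity is $(d\Linf\Delta\norm{\vsigma}_2^2/T)^{1/4}$, which I would obtain by taking $\Delta_i=\Delta/d$ and applying a power-mean inequality.

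\textbf{Main obstacle.} The hardest step is rigorously handling the noise term uniformly over all $\eta$, which requires combining the divergence instance and the stall instance on a single function. I would first try a case split on $\eta$ relative to the threshold $c/\Linf$, letting coordinate 1 punish large steps in both regimes.
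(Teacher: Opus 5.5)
\textbf{The gap is in the noise term.} Your lifting sums independent one-dimensional bounds $\sum_i(\Linf\Delta_i\sigma_i^2/T)^{1/4}$. This gives every coordinate the full curvature $\Linf$, which is not an admissible allocation since $\sum_i L_i=\Linf$. The power-mean step also runs the wrong way. With $\Delta_i=\Delta/d$ you get $(\Linf\Delta/(dT))^{1/4}\sum_i\sigma_i^{1/2}$, and you need $\sum_i\sigma_i^{1/2}\ge d^{1/2}\norm{\vsigma}_2^{1/2}$. The power-mean inequality only gives the upper bound $\sum_i\sigma_i^{1/2}\le d^{3/4}\norm{\vsigma}_2^{1/2}$.

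Take $\vsigma=(s,0,\dots,0)$, precisely the sparse regime where the separation is claimed. Your bound becomes $(s^2\Linf\Delta/(dT))^{1/4}$, which is below even SignSGD's upper bound.

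Your remark that the symmetric instance sees $\vsigma$ only through $\norm{\vsigma}_2^2$ is also false. Run the one-dimensional SGD analysis per coordinate with a common, well-chosen $\eta$. It shows SGD attains $O(\sqrt{\Linf\Delta/T}+(\norm{\vsigma}_1^2\Linf\Delta/T)^{1/4})$ on every separable instance with $L_i=\Linf/d$. That is exactly SignSGD's rate, so this instance gives no separation.

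\textbf{What is missing is the coupling through the shared step size.} A coordinate carrying both curvature $L_i$ and noise $\sigma_i$ can be made to keep $|p_i'|\gtrsim\sigma_i\sqrt{\eta L_i}$, capped by $\sqrt{L_i\Delta_i}$. This works because noise-amplified steps overshoot once $\eta\gtrsim\epsilon_i^2/(L_i\sigma_i^2)$. A coordinate hosting a linear piece contributes only about $\sqrt{\Delta_i/(T\eta)}$, whatever its noise.

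This is also why your step (c) is off. A quadratic well ``where the mass lands'' has small gradients near its bottom, and $\min_t$ picks them up.

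Balancing $\sqrt{\eta}\sum_i\sigma_i\sqrt{L_i}$ against $\sqrt{d\Delta/(T\eta)}$ over $\eta$ gives $(d\Delta(\sum_i\sigma_i\sqrt{L_i})^2/T)^{1/4}$. This is the second term of the coordinate-wise bound of \citet{jiang2024convergence} that the paper invokes. Only then does the paper's allocation $L_i\propto\sigma_i^2$ finish the job. It makes Cauchy--Schwarz tight, $\sum_i\sigma_i\sqrt{L_i}=\sqrt{\Linf}\norm{\vsigma}_2$. Add a curvature floor of order $\Linf/d$ so noiseless coordinates can still host linear pieces, and you get $(d\norm{\vsigma}_2^2\Linf\Delta/T)^{1/4}$ up to constants.

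\textbf{The deterministic half needs a repair.} It is the same idea as the paper's choice $L_1\ge\Linf/2$, but ``tiny curvature'' on the remaining coordinates is not free. A piece of slope $-\epsilon_i$ needs length at least $\epsilon_i/L_i$ to return to zero slope, and during that stretch it drops by at least $\epsilon_i^2/(2L_i)$ of the budget $\Delta_i$. So you need $L_i\gtrsim\epsilon_i^2/\Delta_i\approx\Linf/T$, using $\epsilon_i^2\approx\Delta_i/(T\eta)$ with $\eta\approx 1/\Linf$. Taking $L_i=\Linf/(2(d-1))$ for $i\ge 2$ works precisely when $T\gtrsim d$.

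Some such restriction is unavoidable, and the statement silently needs it. A step of length $\norm{\nabla f(\x)}_1/\Linf$ along $-\sign{\nabla f(\x)}$ shows $\norm{\nabla f(\x)}_1\le\sqrt{2\Linf(f(\x)-\inf f)}$ for every $\ell_\infty$-smooth $f$. With $\vsigma=\0$ and $\eta\le 1/\Linf$, gradient descent is monotone. Hence $\sqrt{d\Linf\Delta/T}$ cannot be a lower bound when $T\ll d$.
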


\begin{proof}[Proof Sketch.]
    We reduce the claim to the coordinate-wise lower bound of~\citet{jiang2024convergence}. Under \cref{ass:separable smoothness} with smoothness vector $\L=(L_1,\ldots,L_d)$ and separable variance vector $\vsigma=(\sigma_1,\ldots,\sigma_d)$, their result gives a hard instance for vanilla SGD as~\eqref{eq:lower-sgd-coordinate}.
    
    The key observation is that \cref{lem:smooth-equivalence} converts any $\L$-separable smooth function into an $\ell_\infty$-smooth function with constant $\norm{\L}_1$. Therefore, under the constraint $\norm{\L}_1=\Linf$, the adversary is free to allocate the curvature across coordinates. Optimizing this allocation yields the desired lower bound under \cref{ass:infty smoothness}.

    For the deterministic term in~\eqref{eq:lower-sgd-coordinate}, we choose a highly imbalanced smoothness vector, for example $L_1 \ge \nicefrac{\norm{\L}_1}{2}$ and $\norm{\L}_1=\Linf$. Then $\norm{\L}_\infty=\Omega(\Linf)$, and the coordinate-wise lower bound gives
    \begin{equation}
        \expect{\min_{t\in[T]} \norm{\nabla f(\x_t)}_1}
        =
        \Omega\brac{
            \sqrt{\frac{d\Linf\Delta}{T}}
        } .
    \end{equation}
    This shows that, in the worst case, SGD suffers an additional factor $d$ even in the noiseless part of the complexity.

    For the stochastic term in~\eqref{eq:lower-sgd-coordinate}, we allocate the curvature according to the noise profile by setting $L_i = \brac{\nicefrac{\sigma_i^2}{\norm{\vsigma}_2^2}}\Linf, i\in[d]$ so that $\SumD L_i=\Linf$ and it yields that 
    \begin{equation}
        \expect{\min_{t\in[T]} \norm{\nabla f(\x_t)}_1}
        =
        \Omega\brac{
            \brac{
                \frac{d\norm{\vsigma}_2^2\Linf\Delta}{T}
            }^{1/4}
        } .
    \end{equation}
    The two constructions above give two valid $\ell_\infty$-smooth hard instances satisfying the same global parameters $\Linf,\vsigma,\Delta$. Taking the worse of them, and using $\max\{a,b\}=\Omega(a+b)$ for non-negative quantities $a,b$, we obtain the result in \cref{thm:sgd_lower_infty}.
\end{proof}
\textbf{Comparison based on density function $\phi(\vsigma)$.} We proceed to compare the rate obtained in \cref{thm:ssgd_upper_infty}, which is the \emph{upper bound} for SignSGD, and \cref{thm:sgd_lower_infty}, which is the \emph{lower bound} for SGD. We define the density function $\phi : \R^d \to \R$ for noise $\vsigma$ as follows:
\begin{equation}
    \phi(\vsigma) = \dfrac{\norm{\vsigma}_1^2}{d\norm{\vsigma}_2^2} \in \left[\dfrac{1}{d}, 1 \right].
\end{equation}
The above function $\phi$ is a measure of how ``dense'' a vector is. $\phi(\vsigma) \approx \nicefrac{1}{d}$ corresponds to highly skewed noise acting on only a few coordinates, whereas $\phi(\vsigma) = 1$ represents perfectly uniform noise across all parameters. Now, we are able to compare SignSGD with SGD: to find an $\epsilon$-stationary point in terms of the $\ell_1$-norm, the required complexity is
\begin{align}
    {\color{red}\Theta} \brac{\dfrac{\Linf \Delta}{\epsilon^2} + \dfrac{\norm{\vsigma}_1^2 \Linf \Delta}{\epsilon^4}} \quad &\text{for SignSGD}, \label{eq:sign_complexity}\\
    \quad \text{and} \quad \Omega\left(\dfrac{{\color{red}d}\Linf \Delta}{\epsilon^2} + \dfrac{\norm{\vsigma}_1^2 \Linf \Delta}{{\color{red}\phi(\vsigma)}\epsilon^4}\right) \quad &\text{for SGD}. \label{eq:sgd_complexity}
\end{align}
In view of the above two complexity bounds, we make the following observations.

\begin{enumerate}
    \item \textbf{Deterministic term.} For the first noiseless term, SignSGD reduces the complexity of SGD by a factor of $\textcolor{red}{d}$, revealing its superior dependence on the dimensional factors.

    \item \textbf{Stochastic term.} Since $\nicefrac{1}{d} \le \phi(\vsigma) \le 1$, we underline that the second noise-dependent term of SignSGD is still \emph{strictly better} than SGD. Moreover, when the distribution of the noise $\vsigma$ is sparse, SignSGD further reduces the required iteration count of SGD by a factor of $d$. 
\end{enumerate}

As shown above, our results provide the first problem setting in which SignSGD achieves a provably better complexity bound (in terms of the dimensional dependence) than SGD. 

\subsection{Empirical Study}
In previous sections, we have discussed how SignSGD can potentially outperform SGD, especially under the $\ell_\infty$-smooth and sparse noise settings. To verify the theoretical results, we conduct a comprehensive empirical study for SignSGD and SGD under a variety of regimes. Our code is available at \url{https://github.com/Dingzhen230/SignSGD_Outperforms_SGD}.

\subsubsection{Experiments on Numerical Examples}

To amplify the difference between the two optimizers, we manually crafted two numerical example functions that fit into the extreme cases stated in \cref{sec:theo_improve}. Due to limited space, the experimental results of these numerical examples are shown in \cref{fig:sign_vs_sgd_toy}. Our findings are summarized as follows.

\begin{enumerate}
    \item \textbf{Deterministic example.} SignSGD outperforms SGD when facing a $\ell_\infty$-geometry target and a sufficiently large $d$. Therefore, we set our target function as an imbalanced quadratic objective $f(\x) = \nicefrac{1}{2} \sum_{i=1}^{d} L_i x_i^2$ for $ \x \in \R^d$. To better demonstrate the difference, we set $d=5000$ and $L_1 = 1000, L_i = 1, i \ge 2$. From \cref{fig:deter} we can observe that SGD performs worse than SignSGD because it is heavily penalized for the 1st coordinate where $L_1$ dominates $\Linf$.

    \item \textbf{Stochastic example.} SignSGD outperforms SGD when facing sparse noise structure. We switch to a simple  quadratic objective function $f(x) = \norm{\x}_2^2/2$ for $\x\in\mathbb{R}^{100}$. Then we add Gaussian noise $\mathcal{N}(0,100^2)$ to only the first component of the gradient, which further results in maximizing the sparsity of $\vsigma$ as $\phi(\vsigma) = \nicefrac{1}{d}$ under such a setting. Results in \cref{fig:sto} align with our findings, as SignSGD consistently outperforms SGD facing sparse noise.
\end{enumerate}

\subsubsection{Experiments on nanoGPT Pretraining}

Following the setup detailed in \cref{app:experimental_details}, we train the nanoGPT model~\citep{nanogpt} on the C4 dataset~\citep{JMLR:v21:20-074} using SGD and SignSGD~\citep{bernstein2018signsgd}. The training and validation learning curves are shown in \cref{fig:sign_vs_sgd_llm}. We observe a substantial performance gap between the two optimizers: while SGD achieves a rapid decrease in loss during the very early stages of training, its progress drastically slows down and plateaus shortly after. In contrast, SignSGD maintains a consistent and significantly faster convergence rate throughout the entire pretraining process.

To understand the fundamental cause of this severe performance degradation in SGD, and to verify whether it stems from the highly sparse noise distribution identified in our theory, we leverage the empirical framework in~\citet{bernstein2018signsgd} to investigate the noise distributions in practice. Specifically, we track the dynamic evolution of the gradient noise density function ($\phi(\vsigma)$) during training and compare the optimization trajectory of our LLM against a standard CNN baseline (ResNet-20~\citep{he2016resnet} on CIFAR-10~\citep{krizhevsky2009learning}). The comparative results are presented in \cref{fig:density_evolution}.

\begin{figure}[t]
  \centering
  \begin{subfigure}[b]{0.45\textwidth}
    \centering
    \includegraphics[width=\textwidth]{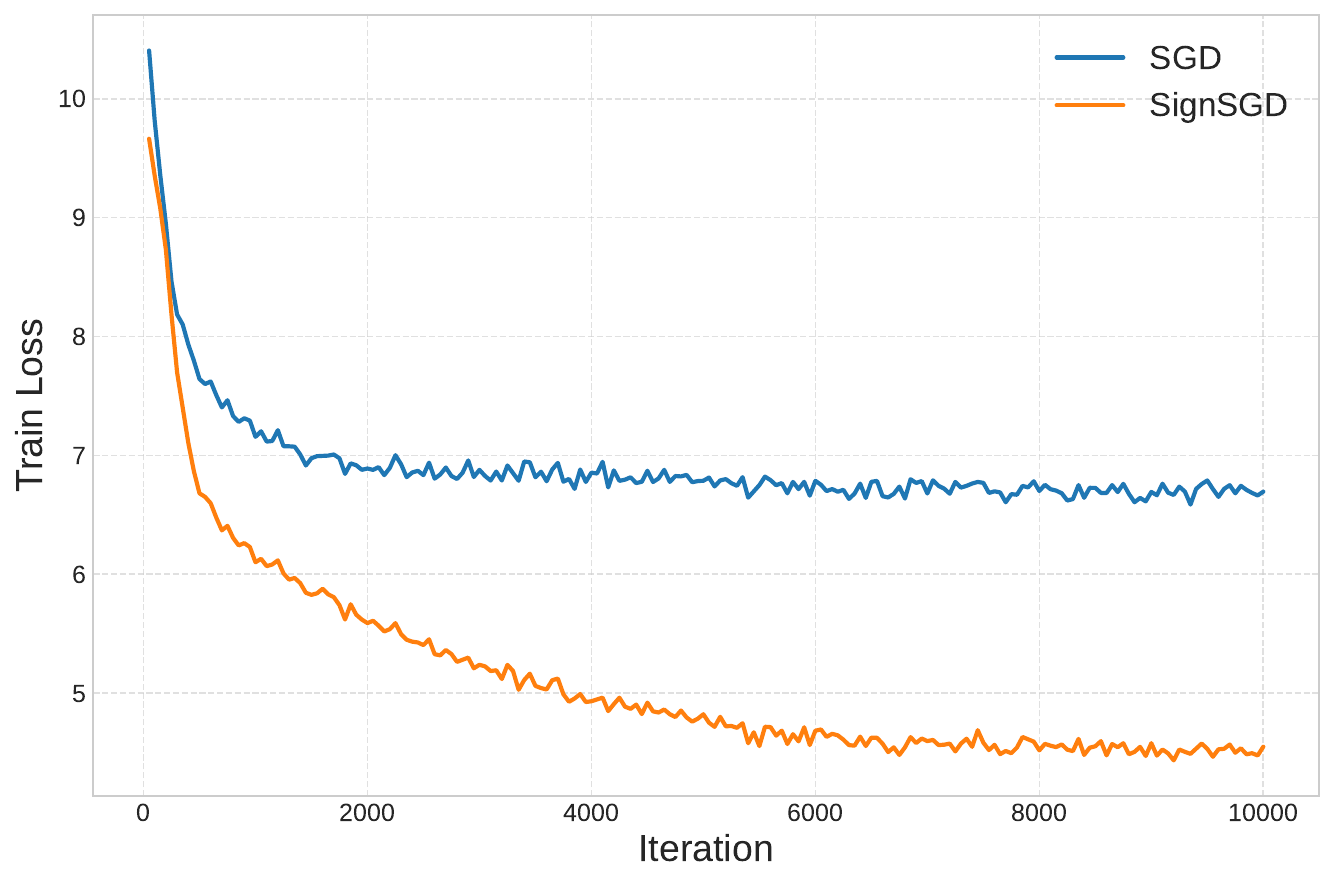}
    \label{fig:train_loss}
  \end{subfigure}
  \hfill
  \begin{subfigure}[b]{0.45\textwidth}
    \centering
    \includegraphics[width=\textwidth]{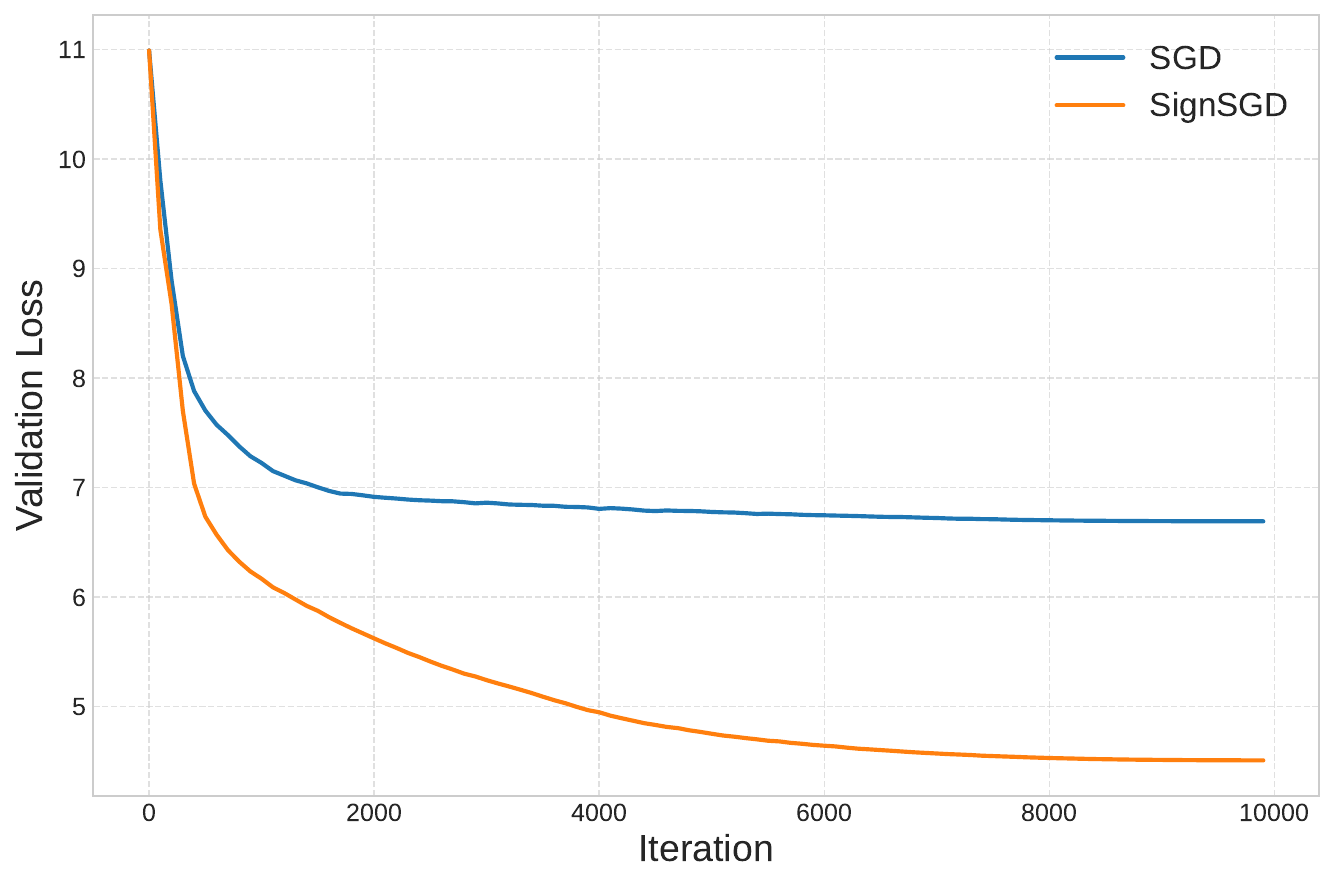}
    \label{fig:val_loss}
  \end{subfigure}
  \vspace{-10pt}
  \caption{The training and validation loss for nanoGPT trained on C4.\label{fig:sign_vs_sgd_llm}}
  \vspace{-10pt}
\end{figure}

\begin{figure}[tbp]
  \centering
  \begin{subfigure}[b]{0.45\textwidth}
    \centering
    \includegraphics[width=\textwidth]{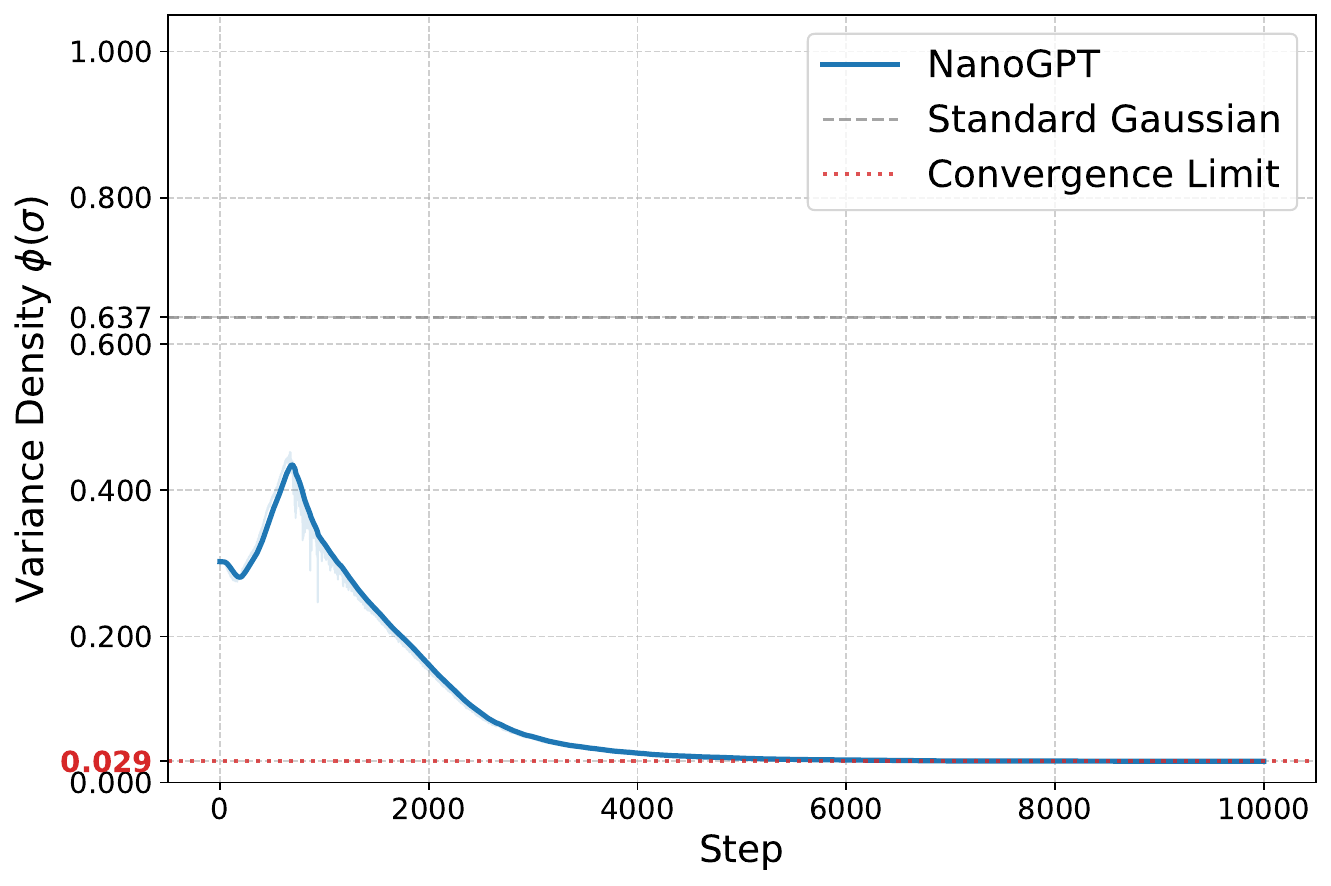}
    \caption{Noise density on nanoGPT.}
    \label{fig:density_llm}
  \end{subfigure}
  \hfill
  \begin{subfigure}[b]{0.45\textwidth}
    \centering
    \includegraphics[width=\textwidth]{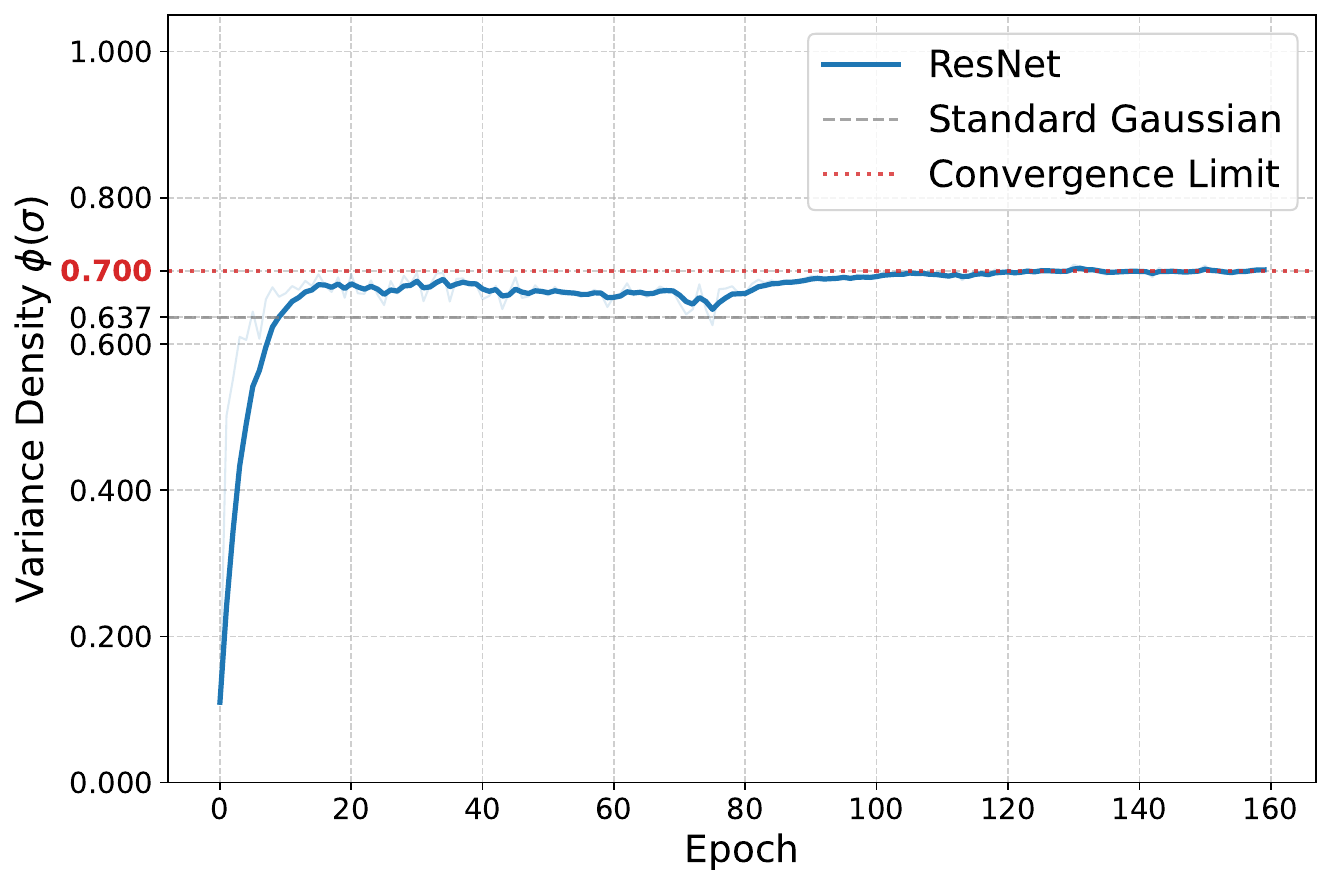}
    \caption{Noise density on ResNet-20.}
    \label{fig:density_cnn}
  \end{subfigure}
  \caption{Evolution of gradient noise sparsity density $\phi(\vsigma)$ along the training trajectory of SGD. \cref{fig:density_llm} shows noise density of LLM(nanoGPT-124m) tracked continuously across 10000 iterations. \cref{fig:density_cnn} shows noise density of CNN (ResNet-20) tracked continuously across 160 epochs.\label{fig:density_evolution}}
  \vspace{-10pt}
\end{figure}

As illustrated in \cref{fig:density_evolution}, the noise distribution trends between CNN and LLM training exhibit a striking discrepancy. In the CNN training, the gradient noise starts relatively sparse but rapidly homogenizes during the early epochs, eventually stabilizing into a highly dense state ($\phi \approx 0.7$). This aligns perfectly with classical assumptions where noise become more Gaussian-like over time, as the expected density $\phi$ for an isotropic Gaussian distribution analytically converges to $\nicefrac{2}{\pi} \approx 0.637$. Conversely, the LLM starts with a distribution with $\phi \approx 0.3$ and becomes \textit{increasingly sparse} as training progresses, plunging to extreme levels of $\phi \to 0.03$ before the middle of training procedure. 

These empirical findings perfectly corroborate our theoretical insights by providing a unified explanation for the contrasting optimization dynamics observed across different modalities. In CNNs, where the gradient noise rapidly homogenizes into a dense state as visualized in \cref{fig:density_cnn}, our theory correctly predicts that SignSGD and SGD should perform comparably, as seen in~\citet{bernstein2018signsgd}. Conversely, in LLM pretraining, \cref{fig:density_llm} shows that the noise distribution becomes increasingly sparse and heavily skewed. Standard SGD, whose updates are proportional to gradient magnitudes, is paralyzed by this extreme variance heterogeneity. Sign-based optimizers, however, safely navigate this extreme sparsity by utilizing coordinate-wise updates that are invariant to gradient magnitudes (yielding the decisive performance advantage seen in \cref{fig:sign_vs_sgd_llm}). Our theoretical framework successfully reconciles both the performance parity in CNNs and the significant superiority of SignSGD in LLMs, confirming that sign-based methods are intrinsically more suited for LLMs as established in \cref{sec:theo_improve}.

%---Matrix---%

\section{Matrix Optimizers}

While vector-based sign methods effectively handle separable scaling, modern LLM architectures rely heavily on matrix multiplications. This motivates extending our framework to the matrix domain. The Muon algorithm is introduced by~\citet{jordan2024muon}, and there lies essential similarity between SignSGD and Muon:~\citet{bernstein2024old} pointed out that SignSGD can be regarded as \textit{steepest descent under infinity norm}, while Muon can be regarded as \textit{steepest descent under spectral norm}. Now that we have already illustrated the edge of SignSGD in \cref{sec:compare}, we will show in the sequel that it's natural to extend our previous framework from vectors to the matrix domain.

\subsection{Notations and Assumptions}

We denote the set of $m \times m$ positive semi-definite (PSD) matrices by $\SBB^m$. For any $\XB \in \R^{m \times n}$. The matrix sign operator is defined as $\msign{\XB} := \U\V^\top$, where $\XB = \U\bSigma\V^\top$ is the (compact) singular value decomposition (SVD) of $\XB \in \R^{m \times n}$. Following common practice~\citep{li2025note,shen2025convergence,sato2025convergence,chang2025convergence,pan2025unbiased}, we assume zero numerical error in Newton--Schulz algorithm. The inner product between matrices $\XB,\YB\in\R^{m\times n}$ is denoted by $\inner{\YB}{\XB}:=\Tr{\YB^\top\XB}$. We let $\norm{\cdot}_\op$, $\norm{\cdot}_*$, and $\norm{\cdot}_\Fn$ denote the matrix operator norm, nuclear norm, and Frobenius norm, respectively. For a vector $\mathbf{v} \in \mathbb{R}^m$, $\text{diag}(\mathbf{v}) \in \mathbb{R}^{m \times m}$ denotes the rectangular diagonal matrix whose elements are given by $(\text{diag}(\mathbf{v}))_{ij} = v_i$ if $i = j$ and $0$ otherwise.  Throughout this section, we consider the matrix optimization problem $\min_{\R^{m\times n}} F(\WB)$ under the following assumptions.

\renewcommand{\theassumption}{\arabic{assumption}b}
\renewcommand{\theHassumption}{\arabic{assumption}b}
\setcounter{assumption}{0} 

\begin{assumption}[Lower bounded objective]
\label{ass:Matrix-bounded-below}
    The function $F \in \R^{m \times n} \to \R$ is bounded from below. There exists $F^* > -\infty$ such that $F(\WB) \geq F^*$ holds for all $\WB \in \mathbb{R}^{m \times n}$. We further denote $\Delta = F(\WB_0) - \inf_{\WB\in\R^{m \times n}} F(\WB)$.
\end{assumption}

\begin{assumption}[Extension of \cref{ass:variance} in matrix form]
\label{ass:Matrix-variance}
    At step $t$ we observe a mini-batch of mutually independent gradients $G_{t}=\{\Gb_{t}^{1},\cdots,\Gb_{t}^{B}\}$ satisfying $\E\sqbrac{\Gb_t^b|\F_{t-1}}=\nabla f(\WB_t),\forall b\in[B]$ where $\mathcal{F}_{t}=\sigma(G_{1},\dots,G_{t})$ denotes the natural filtration. Denote by $\NB_t^b = \Gb_t^b - \nabla f(\WB_t)$, there exists $\vSigma \in \SBB^{m}$ such that 
    \begin{align*}
        \expect{(\NB_t^b) (\NB_t^b)^\top|\F_{t-1}} \preceq \vSigma^2,\quad\forall b\in[B].
    \end{align*}
\end{assumption}

\begin{assumption}[Spectral norm smoothness]
\label{ass:Spectral norm smooth}
    We say $F: \R^{m \times n} \to \R$ is $L_*$-spectral norm smooth if for any $\WB, \WB' \in \R^{m \times n}$, it holds that
    \begin{align*}
        \abs{F(\WB') - (F(\WB) + \inner{\nabla F(\WB)}{\WB'-\WB})} \le \dfrac{L_*}{2} \norm{\WB' - \WB}^2_{\op}.
    \end{align*}
\end{assumption}
\cref{ass:Matrix-variance} can be viewed as an extension of \cref{ass:variance} into matrix space, and has been widely used for analyzing Muon and other matrix Optimizers (\citet[Assumption~3]{an2025asgo},~\citet[Assumption~3]{pan2025unbiased}). \cref{ass:Spectral norm smooth} accounts for the distinct structure of matrix parameters, which can be viewed as an extension of \cref{ass:infty smoothness} into matrix space.

\subsection{Upper bound Theory of Muon}
The Muon algorithm is presented in \cref{alg:muon}. We state its upper bound below, with the proof deferred to \cref{sec:proof-muon-upperbound}.

\begin{theorem}[Muon upper bound]
\label{thm:muon_upper_infty}
    Run \cref{alg:muon} for $T$ iterations under \cref{ass:Spectral norm smooth,ass:Matrix-variance,ass:Matrix-bounded-below}, by setting the hyperparameters as:
    \begin{equation}\label{eq:hyp_muon}
        \eta = \sqrt{\dfrac{2\Delta}{L_*T}},\quad B = \max \bbrac{1, \dfrac{\norm{\vSigma}_*^2}{\Delta L_*}T}.
    \end{equation} 
    Denote $N = BT$ as the total complexity, \cref{alg:muon} guarantees:
    \begin{equation}\label{eq:convergence-rate-ssgd-infty}
        \expect{\dfrac{1}{T}\sum_{t=1}^T\norm{\nabla F(\WB_t)}_*} 
        = \bigO{
        {\sqrt{\dfrac{L_*\Delta}{N}}} + \brac{\dfrac{\norm{\vSigma}_*^2L_* \Delta}{N}}^{1/4}
        }.
    \end{equation}
\end{theorem}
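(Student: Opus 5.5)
We follow the standard descent-lemma argument for normalized steepest descent, adapted to the spectral/nuclear duality. Write $\Gb_t = \nabla F(\WB_t) + \NB_t$ with $\NB_t = \nicefrac{1}{B}\sum_b \NB_t^b$, and note $\norm{\msign{\Gb_t}}_\op = 1$. By \cref{ass:Spectral norm smooth},
\begin{align*}
F(\WB_{t+1}) \le F(\WB_t) - \eta\inner{\nabla F(\WB_t)}{\msign{\Gb_t}} + \frac{L_*\eta^2}{2}.
\end{align*}

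\textbf{Step 1: inner product lower bound.} The key deterministic inequality is
\begin{align*}
\inner{\nabla F(\WB_t)}{\msign{\Gb_t}} \ge \norm{\nabla F(\WB_t)}_* - 2\norm{\NB_t}_*.
\end{align*}
It follows from three facts. First, $\inner{\Gb_t}{\msign{\Gb_t}} = \norm{\Gb_t}_*$. Second, $\abs{\inner{\NB_t}{\msign{\Gb_t}}} \le \norm{\NB_t}_*$ by Hölder duality between the operator and nuclear norms. Third, the triangle inequality gives $\norm{\Gb_t}_* \ge \norm{\nabla F(\WB_t)}_* - \norm{\NB_t}_*$.

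\textbf{Step 2: telescoping.} Summing over $t$, using \cref{ass:Matrix-bounded-below}, and taking expectations gives
\begin{align*}
\frac{1}{T}\sum_t \E\norm{\nabla F(\WB_t)}_* \le \frac{\Delta}{\eta T} + \frac{L_*\eta}{2} + \frac{2}{T}\sum_t \E\norm{\NB_t}_*.
\end{align*}

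\textbf{Step 3: the noise term.} This is the main obstacle: we must bound $\E\norm{\NB_t}_*$ by $\norm{\vSigma}_*/\sqrt{B}$, which is the matrix analogue of $\sum_i \E|n_i| \le \norm{\vsigma}_1/\sqrt B$. I would use the representation $\norm{\NB}_* = \Tr{(\NB\NB^\top)^{1/2}}$ together with the operator-concavity of the square root, so that $\E\,\Tr{(\NB\NB^\top)^{1/2}} \le \Tr{(\E\NB\NB^\top)^{1/2}}$ by Jensen's inequality for the concave trace function.

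By independence and zero conditional mean, the cross terms vanish and
\begin{align*}
\E[\NB_t\NB_t^\top \mid \F_{t-1}] = \frac{1}{B^2}\sum_b \E[\NB_t^b(\NB_t^b)^\top \mid \F_{t-1}] \preceq \frac{\vSigma^2}{B}.
\end{align*}
Operator monotonicity of $X\mapsto X^{1/2}$ then yields $\Tr{(\vSigma^2/B)^{1/2}} = \norm{\vSigma}_*/\sqrt B$, since $\vSigma$ is PSD.

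If one prefers to avoid the concavity argument, an alternative is the variational form $\norm{\NB}_* = \min_{\PB\succ 0}\frac12(\Tr{\PB^{-1}\NB\NB^\top}+\Tr{\PB})$. Fixing $\PB = \vSigma/\sqrt{B}$ before taking expectations (with a perturbation if $\vSigma$ is singular) gives the same bound.

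\textbf{Step 4: parameter choice.} Plugging in $\eta=\sqrt{2\Delta/(L_*T)}$, the first two terms become $\sqrt{2L_*\Delta/T}$, so
\begin{align*}
\frac{1}{T}\sum_t \E\norm{\nabla F(\WB_t)}_* \lesssim \sqrt{\frac{L_*\Delta}{T}} + \frac{\norm{\vSigma}_*}{\sqrt B}.
\end{align*}
There are two cases for $B$:
\begin{itemize}
\item If $B=1$, then $\norm{\vSigma}_*^2 T \le \Delta L_*$, so $\norm{\vSigma}_* \le \sqrt{L_*\Delta/N}$ with $N=T$.
\item Otherwise $B = \norm{\vSigma}_*^2 T/(\Delta L_*)$, so $N = \norm{\vSigma}_*^2T^2/(\Delta L_*)$. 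Then both $\sqrt{L_*\Delta/T}$ and $\norm{\vSigma}_*/\sqrt B = \sqrt{L_*\Delta/T}$ equal $(\norm{\vSigma}_*^2L_*\Delta/N)^{1/4}$.
\end{itemize}
Combining the two cases gives \eqref{eq:convergence-rate-ssgd-infty}.
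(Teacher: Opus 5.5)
Your proposal is correct and matches the paper's proof essentially step for step. Both use the same descent inequality with $\inner{\nabla F(\WB_t)}{\msign{\Gb_t}} \ge \norm{\nabla F(\WB_t)}_* - 2\norm{\NB_t}_*$, the same Jensen step via concavity of $X \mapsto \Tr{X^{1/2}}$ on the PSD cone, and the same two-case choice of $B$; the only difference is that the paper cites \citet[Lemma 9]{an2025asgo} for $\E[\NB_t\NB_t^\top] \preceq \vSigma^2/B$, whereas you derive it directly from conditional independence (and give a valid variational alternative).
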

A key feature of \cref{thm:muon_upper_infty} is that the bound does not depend explicitly on the matrix dimension $\min\{m,n\}$, in sharp contrast to existing analyses of Muon that incur such dimension dependence~\citep{li2025note,shen2025convergence,chang2025convergence,huang2025limuon}.

\subsection{Lower bound Theory of Muon}
After establishing the upper bound for \cref{alg:muon}, we present a corresponding lower bound under the same conditions, which verifies the sharpness of \cref{thm:muon_upper_infty}. The proof is deferred to \cref{sec:proof-muon-lowerbound}.
\begin{theorem}[Muon lower bound]
\label{thm:muon_lower_infty}
Fix $T \ge 1$ and a scaling parameter $\eta>0$, and consider running \cref{alg:muon} for $T$ iterations with batch size $B$. For any given parameters $\Linf$, $\vsigma$, and $\Delta$, there exists a function $f:\R^{m \times n} \to \R$ and a stochastic gradient oracle such that:
\begin{enumerate}
    \item $f$ satisfies \cref{ass:Spectral norm smooth,ass:Matrix-bounded-below};
    \item the stochastic gradients $\Gb_t$ satisfy \cref{ass:Matrix-variance};
    \item denote $N = BT$, the iterates generated by \cref{alg:muon} satisfy
    \begin{equation}
    \label{eq:lower_muon}
        \expect{\min_{t} \norm{\nabla F(\WB_t)}_*} = 
        \bigO{
            \sqrt{
                \dfrac{L_*\Delta}{N}
            } 
            + \brac{
                \dfrac{\norm{\vSigma}_*^2 L_*\Delta}{N}
            }^{1/4}
        } .
    \end{equation}
\end{enumerate}
\end{theorem}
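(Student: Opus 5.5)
Although the display in \cref{thm:muon_lower_infty} is written with $\mathcal{O}$, the intended content is a lower bound matching \cref{thm:muon_upper_infty}. I will therefore prove the $\Omega$ version by reducing to the vector lower bound of \cref{thm:ssgd_lower_infty} through diagonal embedding.

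\textbf{Diagonal embedding.} Let $k=\min\{m,n\}$. Given $\L\in\R_+^k$, deficits $\Delta_i$ and noise levels $\sigma_i$, take the hard separable instance $f(\x)=\sum_{i=1}^k p_i(x_i)$ built in the proof of \cref{thm:ssgd_lower_infty}. Define
\[
F(\WB)=\sum_{i=1}^k p_i(W_{ii}),
\]
and start from $\WB_0=\diag(\x_0)$. Its gradient is $\nabla F(\WB)=\diag(p_1'(W_{11}),\dots,p_k'(W_{kk}))$.

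\textbf{Oracle.} The stochastic oracle returns $\Gb^b=\diag(\g^b)$, where $\g^b$ is the coordinatewise bimodal oracle \eqref{eq:noise}, drawn independently across coordinates. The mini-batch average $\Gb_t$ is then diagonal.

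\textbf{Muon reduces to SignSGD.} For any rectangular diagonal $\Gb_t$ we have $\msign{\Gb_t}=\diag(\sign{\g_t})$, using the convention $\sign{0}=0$, which corresponds to the compact SVD dropping zero singular values. Consequently, if the iterates stay diagonal, Muon's update is exactly SignSGD's update on the diagonal vector. By induction, every $\WB_t$ is diagonal and equals $\diag(\x_t)$, where $\x_t$ is the SignSGD trajectory on $f$. Moreover,
\[
\norm{\nabla F(\WB_t)}_*=\norm{\nabla f(\x_t)}_1,
\]
so the vector lower bound for SignSGD transfers verbatim to Muon.

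\textbf{Assumptions.}
\begin{itemize}
\item \emph{Lower boundedness.} This is immediate, with the same $\Delta=\sum_i\Delta_i$.
\item \emph{Spectral smoothness.} The Bregman divergence of $F$ at $\WB,\WB'$ depends only on the diagonal difference $\mathbf{h}=\mathrm{diagvec}(\WB'-\WB)$ and is bounded by $\tfrac12\sum_i L_i h_i^2\le \tfrac12\norm{\L}_1\norm{\mathbf{h}}_\infty^2$. Since $\abs{h_i}=\abs{\e_i^\top(\WB'-\WB)\e_i}\le\norm{\WB'-\WB}_\op$, \cref{ass:Spectral norm smooth} holds with $L_*=\norm{\L}_1$. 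This is the same mechanism as \cref{lem:smooth-equivalence}.
\item \emph{Noise.} The noise $\NB=\diag(\mathbf{n})$ has independent, centered coordinates, so $\E[\NB\NB^\top]=\diag(\E n_1^2,\dots,\E n_k^2,0,\dots)\preceq \diag(\sigma_1^2,\dots,\sigma_k^2,0,\dots)=:\vSigma^2$. This gives \cref{ass:Matrix-variance} with $\norm{\vSigma}_*=\norm{\vsigma}_1$. If a general PSD $\vSigma$ is prescribed, first rotate: write $\vSigma=\QB\,\diag(\vsigma)\QB^\top$ and use $F(\WB)=\tilde F(\QB^\top\WB)$. Both $\msgn$ and the operator and nuclear norms are equivariant under left orthogonal multiplication, so the reduction is unchanged.
\end{itemize}

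\textbf{Tuning.} With these identifications, choose $L_i\propto\sigma_i$ and allocate $\Delta_i$ exactly as in Step 4 of the proof of \cref{thm:ssgd_lower_infty}, subject to $\sum L_i=L_*$ and $\sum\Delta_i=\Delta$. The aggregated one-dimensional bounds then give
\[
\Omega\Bigl(\sqrt{L_*\Delta/N}+\bigl(\norm{\vSigma}_*^2L_*\Delta/N\bigr)^{1/4}\Bigr).
\]

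\textbf{Main obstacle.} The delicate step is the exact equality between Muon and SignSGD on diagonal inputs when some stochastic coordinates vanish, which the bimodal oracle makes happen with positive probability. The compact-SVD definition of $\msgn$ must send zero singular values to zero, and this must match the convention $\sign{0}=0$. A secondary point to check is that independence across coordinates preserves the per-coordinate one-dimensional stall argument, which holds because the vector proof is already coordinatewise separable.
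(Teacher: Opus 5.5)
Your proposal is correct and is essentially the paper's proof. Your diagonal embedding $F(\WB)=\sum_i p_i(W_{ii})$ with a left rotation by $\QB$ is the paper's construction $F(\WB)=\sum_i f_i\big((\QB\WB\PB^\top)_{ii}\big)$ with oracle $\QB^\top\diag(\g)\PB$, and like the paper you reduce Muon to SignSGD, match $L_*=\Linf$, $\norm{\vSigma}_*=\norm{\vsigma}_1$ and $\norm{\nabla F(\WB_t)}_*=\norm{\nabla f(\x_t)}_1$, and then invoke \cref{thm:ssgd_lower_infty}. Two remarks: your compact-SVD treatment of zero coordinates is more careful than the paper, which calls $\diag(\sign{\g_t})$ orthogonal even when entries vanish; and you need not assume $\WB_0$ is diagonal, since $F$ and the Muon updates depend only on the diagonal entries.
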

To our best knowledge, \cref{thm:muon_lower_infty} is the first lower complexity bound for Muon. Existing stochastic lower bounds, such as~\citet{arjevani2023lower}, are formulated for vector-valued methods under Euclidean geometry, and thus do not directly apply to spectral-norm smoothness and nuclear-norm stationarity. Our theorem fills this gap and shows that the dimension-free upper bound in \cref{thm:muon_upper_infty} is unimprovable in our matrix geometry. We outline the proof below.

\begin{proof}[Proof Sketch]
    Our proof establishes a strict geometric and dynamic equivalence, mathematically reducing the Muon optimization in the matrix domain to the SignSGD optimization in the vector domain. We outline the proof in the following five steps.

    \textbf{Step 1: Constructing the Matrix Objective with Orthogonal Alignments.} We lift the hard vector instance $f(\x)$ into the matrix domain by defining $F(\WB) = \sum_{i=1}^m f_i\big((\QB\WB\PB^\top)_{ii}\big)$. The reasons why we addtionally introduce two orthogonal matrices $\QB$ and $\PB$ here are:
    \begin{itemize}
        \item $\QB$ is explicitly used to align the separable vector noise with the target matrix covariance $\vSigma$;
        \item $\PB$ is used as the projection matrix, projecting the $m \times n$ matrix into an $m \times m$ subspace. 
    \end{itemize}
    \textbf{Step 2: Designing the Structured Matrix Oracle.} To mirror the stochasticity, we construct a structured stochastic matrix gradient $\Gb_t = \QB^\top \diag(\g_t) \PB$. Driven by the alignment in Step 1, this specific design perfectly satisfies the required matrix noise covariance $\vSigma = \QB^\top \diag(\vsigma) \QB$, while safely isolating the worst-case vector noise $\g_t$ strictly within the projected subspace.

    \textbf{Step 3: Establishing Trajectory Equivalence via SVD.} Since $\QB^\top$ and $\PB$ are naturally orthogonal matrices, they seamlessly form the left and right singular vectors of $\Gb_t$ without altering the SVD transformation. Consequently, the SVD orthogonalization bypasses them and explicitly acts as a scalar $\sign{\cdot}$ operator on the inner diagonal entries. Muon's matrix update $\WB_{t+1} = \WB_t - \eta \msign{\Gb_t}$ mathematically collapses into the exact SignSGD vector update: $\x_{t+1} = \x_t - \eta \sign{\g_t}$.

    \textbf{Step 4: Aligning Geometric Constraints.} We establish a rigorous one-to-one mapping between the governing metrics of both domains. We prove that the vector $\ell_\infty$-smoothness flawlessly translates to the matrix spectral norm smoothness ($L_* = \Linf$), and the trace norm of the matrix noise covariance matches the $\ell_1$-norm of the vector noise ($\norm{\vSigma}_* = \norm{\vsigma}_1$).

    \textbf{Step 5: Transferring the Complexity Bound.} Finally, because orthogonal transformations preserve singular values, we show that the trace norm (nuclear norm) of the matrix gradient $\norm{\nabla F(\WB_t)}_*$ is precisely equivalent to the $\ell_1$-norm of the vector gradient $\norm{\nabla f(\x_t)}_1$, enabling us to directly invoke the vector bounds from \cref{thm:ssgd_lower_infty} to establish the exact $\Omega(\cdot)$ lower bound for Muon.
\end{proof}

\section{Conclusion}
In this work, we bridge the longstanding gap between the empirical superiority of sign-based optimizers and their theoretical guarantees in non-convex stochastic optimization. By moving from the standard $\ell_2$ framework to an $\ell_\infty$-smooth, coordinate-wise noise setting with $\ell_1$-stationarity, we give an optimal complexity-theoretic characterization of SignSGD and identify a problem geometry in which it provably outperforms SGD. Specifically, we prove matching upper and lower bounds for SignSGD, establish a strictly worse lower bound for SGD, and show that the resulting separation becomes most pronounced under sparse or highly heterogeneous noise. We further extend this framework to matrix optimization and obtain matching bounds for Muon under spectral norm smoothness and nuclear norm stationarity. Our empirical results corroborate these theoretical findings. Both controlled sparse noise toy problems and GPT-2 pretraining exhibit the skewed gradient noise structure leveraged by our theory, supporting the viewpoint that sign-based methods enjoy a genuine geometric advantage in real-world high-dimensional problems, such as LLM pretraining.

\bibliographystyle{plainnat}
\bibliography{ref}

\appendix

% \crefalias{section}{appendix} % uncomment if you are using cleveref
\crefalias{section}{appendix}
\crefalias{subsection}{appendix}
\crefalias{subsubsection}{appendix}

\section{Proof of Lemma~\ref{lem:smooth-equivalence}} \label{pf:smooth-Equivalence}

Suppose $f$ is $\L$-separable smooth, we have
\begin{align*}
    \abs{f(\y) - (f(\x) + \inner{\nabla f(\x)}{\y-\x})} &\le \dfrac{1}{2}\norm{\y - \x}^2_\L = \dfrac{1}{2}\SumD L_i (\y_i - \x_i)^2\\
    &\le \dfrac{1}{2}\SumD L_i \norm{\y - \x}_\infty^2= \dfrac{\norm{\L}_1}{2}\norm{\y - \x}_\infty^2.\\
\end{align*}
Therefore $f$ also satisfies \cref{ass:infty smoothness} with $\Linf = \norm{{\L}}_1$.

\section{Analysis for SignSGD}\label{app:sign}

\subsection{Upper Bound for SignSGD}

\begin{proof}[Proof of \cref{thm:ssgd_upper_infty}]
    Under  \cref{ass:infty smoothness}, we have:
    \begin{align*}
        f(\x_{t+1})  
        &\le &&f(\x_t) + \inner{\nabla f(\x_t)}{\x_{t+1} - \x_t} + \dfrac{\Linf}{2}\norm{\x_{t+1} - \x_t}^2_\infty\\
        &= &&f(\x_t) -\inner{\nabla f(\x_t)}{\eta \sign{\g_t}} + \dfrac{\eta^2 \Linf}{2}\\
        &= &&f(\x_t) + \inner{\nabla f(\x_t)}{\eta(\sign{\nabla f(\x_t)} - \sign{\g_t})} \\
        & &&- \inner{\nabla f(\x_t)}{\eta \sign{\nabla f(\x_t)}} + \dfrac{\eta^2 \Linf}{2}\\
        &= &&f(\x_t) + \inner{\nabla f(\x_t)}{\eta(\sign{\nabla f(\x_t)} - \sign{\g_t})} - \eta \norm{\nabla f(\x_t)}_1 + \dfrac{\eta^2 \Linf}{2}\\
        &\le &&f(\x_t) + 2\eta \norm{\nabla f(\x_t) - \g_t}_1 - \eta \norm{\nabla f(\x_t)}_1 + \dfrac{\eta^2 \Linf}{2},
    \end{align*}
    where the last inequality is due to
    \begin{align*}
        &\inner{\nabla f(\x_t)}{\eta(\sign{\nabla f(\x_t)} - \sign{\g_t})} \\
        &= \SumD \nabla_i f(\x_t) * \eta(\sign{\nabla_i f(\x_t)} -\sign{\g_{t,i}})\\
        &\le \SumD 2\eta\abs{\nabla_i f(\x_t)} * \mathbb{I}(\sign{\nabla_i f(\x_t)} \neq\sign{\g_{t,i}})\\
        &\le 2\eta \SumD |\nabla_i f(\x_t)-\g_{t,i}| * \mathbb{I}(\sign{\nabla_i f(\x_t)} \neq\sign{\g_{t,i}})\\
        &\le 2\eta \SumD|\nabla_i f(\x_t)-\g_{t,i}| = 2\eta \norm{\nabla f(\x_t) - \g_t}_1.
    \end{align*}
    Rearranging the obtained relation and summing up yields
    \begin{equation}\label{eq:bound_sign}
        \expect{\dfrac{1}{T}\sum_{t=1}^T\norm{\nabla f(\x_t)}_1} \le \dfrac{\Delta}{\eta T} + 2\expect{\dfrac{1}{T}\sum_{t=1}^T \norm{\nabla f(\x_t) - \g_t}_1} + \dfrac{\eta \Linf}{2}.
    \end{equation}
    we denote $\xi_t = \nabla f(\x_t) - \g_t$ as the noise of gradient in iteration $t$. Due to i.i.d characteristic of $\xi_t$, we have:
    \begin{equation}\label{eq:variance-reduction-vector}
        \begin{aligned}
        \expect{\dfrac{1}{T}\sum_{t=1}^T \norm{\nabla f(\x_t) - \g_t}_1} 
        &=\expect{\dfrac{1}{T}\sum_{t=1}^T \norm{ \xi_t}_1} = \dfrac{1}{T}\sum_{t=1}^T\expect{\norm{\xi_t}_1}\\
        &= \dfrac{1}{T}\sum_{t=1}^T \SumD \E[|\xi_{t,i}|] \le \dfrac{\norm{\vsigma}_1}{\sqrt{B}},
    \end{aligned}
    \end{equation}
    where the inequality follows from Jensen's inequality for the variance of the mini-batch mean. Putting~\eqref{eq:variance-reduction-vector} back to~\eqref{eq:bound_sign}, and setting $\eta = \sqrt{\nicefrac{2\Delta}{\Linf T}}$, We get:
    \begin{equation}\label{eq:sgd_upper_bound}
        \expect{\dfrac{1}{T}\sum_{t=1}^T\norm{\nabla f(\x_t)}_1} \le \sqrt{\dfrac{2\Delta \Linf}{T}} + 2\dfrac{\norm{\vsigma}_1}{\sqrt{B}}.
    \end{equation}
    If the first term dominates, which means
    \begin{equation}
        \sqrt{\dfrac{2\Delta \Linf}{T}} \ge 2\norm{\vsigma}_1 \ge 2\dfrac{\norm{\vsigma}_1}{\sqrt{B}},
    \end{equation}
    by setting $B=1$, we have
    \begin{align*}
        \expect{\dfrac{1}{T}\sum_{t=1}^T\norm{\nabla f(\x_t)}_1} \le \sqrt{\dfrac{2\Delta \Linf}{T}} + 2\dfrac{\norm{\vsigma}_1}{\sqrt{B}} \le (2+\sqrt{2}) \sqrt{\dfrac{\Delta \Linf}{N}}.
    \end{align*}
    Otherwise, we set $B = {\nicefrac{\norm{\vsigma}_1^2}{\Delta \Linf}T} $, which implies
    \begin{equation}\label{eq:sgd_noise_bound}
        \sqrt{\dfrac{2\Delta \Linf}{T}} = \brac{\dfrac{4\norm{\vsigma}_1^2\Delta \Linf}{N}}^{\frac{1}{4}}.
    \end{equation}
    Plugging~\eqref{eq:sgd_noise_bound} into~\eqref{eq:sgd_upper_bound}, we get
    \begin{equation}
        \expect{\dfrac{1}{T}\sum_{t=1}^T\norm{\nabla f(\x_t)}_1} \le \brac{\dfrac{4\norm{\vsigma}_1^2\Delta \Linf}{N}}^{\frac{1}{4}} + 2\brac{\dfrac{\norm{\vsigma}_1^2\Delta \Linf}{N}}^{\frac{1}{4}}.
    \end{equation}
    Combining the above two cases together, we have
    \begin{align*}
        \expect{\dfrac{1}{T}\sum_{t=1}^T\norm{\nabla f(\x_t)}_1} = \bigO{{\sqrt{\dfrac{\Delta \Linf}{N}}} + \dfrac{(\norm{\vsigma}_1)^{\frac{1}{2}} (\Delta \Linf)^{\frac{1}{4}}}{(N)^{\frac{1}{4}}}}.
    \end{align*}
    This completes the proof.
\end{proof}

\subsection{Lower Bound for SignSGD\label{sec:proof-ssgd-lowerbound}}

To exploit the separable nature of SignSGD before generalizing to $d$ dimensions, we start by bounding the complexity of \cref{alg:ssgd} in finding stationary points in one dimension:

\begin{lemma}
    \label{lemma:ssgd:1}
    For any positive integer $n$, suppose that $\epsilon$ satisfies
    \begin{equation}
        \label{eq:lem-ssgd-2-ass}
        \epsilon \le \dfrac{1}{2\sqrt{n}}
    \end{equation}
    Let $x_1 \in \R$ and $x_t = x_1 + (t-1)\eta$ for any $2 \le t \le n + 1$.  Then there exists a function $p : \R \to \R$ such that: (i) $p$ has a 1‑Lipschitz gradient; (ii) $p(x_1) - \inf p \le 1$ (iii) $p'(x_t) = -\epsilon$ for any $t \in [1, n]$.
\end{lemma}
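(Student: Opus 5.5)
The plan is to build $p$ by prescribing its derivative directly as a piecewise-linear function with slopes in $[-1,1]$, and to set $p(x)=p(x_1)+\int_{x_1}^x p'(s)\,ds$. A piecewise-linear, continuous $p'$ with slopes bounded by $1$ in absolute value is $1$-Lipschitz, so (i) holds. Property (iii) is imposed by requiring $p'(x_t)=-\epsilon$ at every grid point. The only real work is (ii): the integral of $p'$ must not drop below $-1$ anywhere to the right of $x_1$, and $p$ must not go below $p(x_1)-1$ to the left. I assume $\eta>0$; the case $\eta<0$ is the mirror image.

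\textbf{Outside the grid.} To the left of $x_1$ I set $p'(x)=-\max\{0,\epsilon-(x_1-x)\}$. This rises linearly with slope $1$ from $-\epsilon$ at $x_1$ to $0$ at $x_1-\epsilon$ and stays $0$ beyond. Because $p'\le 0$ there, $p$ is nonincreasing on $(-\infty,x_1]$, hence $p\ge p(x_1)$ on that half-line. To the right of $x_n$ I do the symmetric thing: $p'$ rises with slope $1$ from $-\epsilon$ to $0$ over an interval of length $\epsilon$ and is $0$ afterwards. This tail costs a decrease of exactly $\epsilon^2/2$.

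\textbf{Between consecutive grid points.} I split into two cases according to $\eta$.

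\emph{Small steps, $\eta\le 4\epsilon$.} I take $p'\equiv-\epsilon$ on $[x_1,x_n]$. The total decrease is then at most
\[
\epsilon(n-1)\eta+\frac{\epsilon^2}{2}\le 4n\epsilon^2+\frac{\epsilon^2}{2}.
\]
By \eqref{eq:lem-ssgd-2-ass}, $n\epsilon^2\le 1/4$, so this is at most $1+\epsilon^2/2$. This overshoots the target by $\epsilon^2/2$. I will fix it either by switching cases at $\eta\le 2\epsilon$ instead, or by letting $p'$ rise slightly between grid points so that each interval's integral exceeds $-\epsilon\eta$.

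\emph{Large steps, $\eta>4\epsilon$.} On each $[x_t,x_{t+1}]$ I use a tent: $p'$ rises with slope $+1$ from $-\epsilon$ and then falls with slope $-1$ back to $-\epsilon$. I cap the peak at a level chosen so that $\int_{x_t}^{x_{t+1}}p'=0$; this is possible because an uncapped symmetric tent has integral $-\epsilon\eta+\eta^2/4\ge 0$ once $\eta\ge 4\epsilon$. Then $p(x_t)=p(x_1)$ for every $t$. Within an interval the running integral dips by at most the part of the tent where $p'<0$, namely $\epsilon^2$. Adding the right tail, $p(x_1)-\inf p\le \tfrac32\epsilon^2\le 1$.

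\textbf{Main obstacle.} The delicate part is making the constants fit uniformly in $\eta$, in particular the intermediate regime $2\epsilon<\eta<4\epsilon$. There an uncapped tent gives per-interval decrease $\epsilon\eta-\eta^2/4\le \epsilon\eta/2\le 2\epsilon^2$, so the total is at most $2n\epsilon^2+\epsilon^2/2\le 1/2+1/8$. Using the threshold $\eta\le 2\epsilon$ for the constant-slope case gives total decrease at most $2n\epsilon^2+\epsilon^2/2\le 5/8$. So I would split into three regimes: $\eta\le 2\epsilon$ (constant slope), $2\epsilon<\eta\le 4\epsilon$ (uncapped tent), and $\eta>4\epsilon$ (capped tent with zero integral). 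In each regime I then verify (ii) against the bound $n\epsilon^2\le 1/4$ from \eqref{eq:lem-ssgd-2-ass}.
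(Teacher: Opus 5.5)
Your proof is correct but more roundabout than the paper's. The paper uses a single construction for every $\eta>0$: slope $-\epsilon$ to the left of $x_1$, the two quadratic pieces $\phi_{t,\epsilon}$ on each gap, and a quadratic right tail. The derivative of $\phi_{t,\epsilon}$ is exactly your uncapped tent. The paper then bounds the per-gap loss uniformly in $\eta$ by
\[
\epsilon\eta-\frac{\eta^{2}}{4}=\epsilon^{2}-\Bigl(\epsilon-\frac{\eta}{2}\Bigr)^{2}\le\epsilon^{2},
\]
which gives $p(x_1)-\inf p\le n\epsilon^2+\epsilon^2/2<1$ with no case analysis.

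So your middle regime already covers every $\eta$. Your ``main obstacle'' disappears once you bound $\epsilon\eta-\eta^2/4$ by its maximum over $\eta$, which is $\epsilon^2$ at $\eta=2\epsilon$, rather than by $\epsilon\eta/2$.

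Your initial overshoot was also only an artifact of replacing $n-1$ by $n$. With $n-1$ gaps, the constant-slope case gives $4(n-1)\epsilon^2+\epsilon^2/2\le 1-\tfrac{7}{2}\epsilon^2$, so even your two-regime split proves the lemma as stated. It does not prove the $(n+1)$-point version, with $p'(x_{n+1})=-\epsilon$ as well, which \cref{lemma:ssgd:2} actually invokes at the extremal value $n\epsilon^2=1/4$. Both the paper's construction and your three-regime one do provide that version.

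What the case split buys you is a stronger object than needed. With flat tails and capped tents you get $|p'|\le\epsilon$ everywhere and a bounded $p$. The paper's $p'$ is unbounded, with tents of height $\eta/2-\epsilon$ and a quadratic tail, but it avoids all case distinctions.
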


\begin{proof}
    We construct the function p as:
    \begin{align*}
        p(x) = 
        \begin{cases}
            -\epsilon (x- x_1) & x\in (-\infty, x_1];\\
            \phi_{t,\epsilon}(x) + c_t & x \in (x_t, x_{t+1}]; \\
            \dfrac{1}{2}(x-x_{t+1})^2 - \epsilon(x-x_{t+1}) + c_{N+1}& x \in (x_{n+1}, \infty),
        \end{cases}
    \end{align*}
    where the values $\{c_t\}_{t=1}^n$ are chosen to ensure that the function $p$ is continuous. The function $\phi_{t, \epsilon}(x)$ is defined as below:
    \begin{align*}
        \phi_{t, \epsilon}(x) = 
        \begin{cases}
            \dfrac{1}{2}(x-x_t)^2 - \epsilon(x - x_t) & x\in \left(x_t, \dfrac{x_t + x_{t+1}}{2}\right];\\[2ex]
            \begin{aligned}
                &-\dfrac{1}{2}(x-x_{t+1})^2 - \epsilon(x- x_{t+1}) \\
                &+ \dfrac{(x_{t+1} - x_t)^2}{4} - (x_{t+1} - x_t)\epsilon
            \end{aligned} & x\in \left(\dfrac{x_t + x_{t+1}}{2}, x_{t+1}\right].
        \end{cases}
    \end{align*}
    Without loss of generality we can assume $x_1 = \eta$, otherwise we just need to perform a translation to our function p. Plugging in $x_t = t\eta$, we have:
    \begin{align*}
        \phi_{t, \epsilon}(x) = 
        \begin{cases}
            \dfrac{1}{2}(x-t\eta)^2 - \epsilon(x - t\eta) & x\in \left(x_t, \dfrac{x_t + x_{t+1}}{2}\right];\\
            -\dfrac{1}{2}(x-(t+1)\eta)^2 - \epsilon(x-(t+1)\eta) + \dfrac{\eta^2}{4} -\eta\epsilon & x\in \left(\dfrac{x_t + x_{t+1}}{2}, x_{t+1}\right].
        \end{cases}
    \end{align*}
    Further, we can give the value $c_t$ as follows:
    \begin{equation}
    \label{Lemma1:eq1}
        c_1 = 0, \quad c_{t+1} = t\brac{\dfrac{1}{4}\eta^2 - \eta\epsilon}.
    \end{equation}
    From the definition of $p$, we have
    \begin{align*}
        p(x) \ge 
        \begin{cases}
            0, & x\in (-\infty, x_1];\\
            \min\bbrac{c_t - \dfrac{1}{2}\epsilon^2, c_{t+1}}, & x\in (x_t, x_{t+1}];\\
            c_{n+1} - \dfrac{1}{2}\epsilon^2, &  x\in (x_{n+1}, +\infty),
        \end{cases}
    \end{align*}
    which can be written as
    \begin{align*}
        \begin{aligned}
            \inf p &\ge \min_{t} c_t - \dfrac{1}{2}\epsilon^2\\
            &= \min\bbrac{0, n\brac{\dfrac{1}{4}\eta^2 - \eta\epsilon}} - \dfrac{1}{2}\epsilon^2.
        \end{aligned}
    \end{align*}
    This implies that
    \begin{align*}
        \begin{aligned}
            p(\x_1) - \inf p &\le \max \bbrac{\dfrac{1}{2}\epsilon^2, n\brac{\eta\epsilon - \dfrac{1}{4}\eta^2}+ \dfrac{1}{2}\epsilon^2}\\
            &\le \max \bbrac{\dfrac{1}{2}\epsilon^2, n\epsilon^2+ \dfrac{1}{2}\epsilon^2}\\
            &=  n\epsilon^2+ \dfrac{1}{2}\epsilon^2.
        \end{aligned}
    \end{align*}
    With~\eqref{eq:lem-ssgd-2-ass}, we have
    \begin{align*}
        n\epsilon^2+ \dfrac{1}{2}\epsilon^2 \le \dfrac{1}{4} + \dfrac{1}{8n} < 1.
    \end{align*}
    This completes the proof.
\end{proof}

\begin{lemma}
    \label{lemma:ssgd:2}
    Consider running \hyperref[alg:ssgd]{SignSGD} on a one-dimensional smooth function $p$ with the scaling parameter $\eta$ and batch size $B$. For any $L > 0$ and $\Delta > 0$ , there exists a function $p:\R \to \R$ and a corresponding stochastic gradient oracle $g_t$ such that: (i) $p$ has L-Lipschitz gradients and $p(x_1) - \inf p \le \Delta$ (ii) the  stochastic gradient $g_t^b$ is unbiased and has a bounded variance of $\sigma^2$ (iii) Given $\epsilon$ such that $\epsilon \le \sqrt{L \Delta}$, if $N = BT \le L\Delta\nicefrac{\brac{4\epsilon^2 + \sigma^2}}{128\epsilon^4} $, then we have $\E [\min  \abs{p'(x_i)}] \ge \epsilon$. Which further implies that:
    \begin{align*}
        \expect{\min_t |p'(x_t)|} \ge C\max \left\{ \brac{\dfrac{L\Delta\sigma^2}{N}}^{\frac{1}{4}}, \sqrt{\dfrac{L\Delta}{N}}\right\}.
    \end{align*}
\end{lemma}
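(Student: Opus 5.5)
We plan to build on \cref{lemma:ssgd:1}, first rescaling it and then adding the bimodal noise oracle~\eqref{eq:noise}.

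\textbf{Rescaling.} Given $L,\Delta$, we set $p(x) = \frac{L\lambda^2}{1}\,\bar p(x/\lambda)$, where $\bar p$ is the unit instance of \cref{lemma:ssgd:1} built on the grid $\bar x_t = \bar x_1 + (t-1)\eta/\lambda$. Since $\bar p$ has a $1$-Lipschitz gradient, $p$ has an $L$-Lipschitz gradient. Moreover $p(x_1)-\inf p \le L\lambda^2$, so we choose $\lambda^2=\Delta/L$. The derivative of $p$ on the grid equals $L\lambda\bar\epsilon = \sqrt{L\Delta}\,\bar\epsilon$. To make it equal $-\epsilon$, we set $\bar\epsilon = \epsilon/\sqrt{L\Delta}\le 1$. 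The constraint $\bar\epsilon\le 1/(2\sqrt n)$ then allows a resisting region of $n = \lfloor L\Delta/(4\epsilon^2)\rfloor$ consecutive grid points $x_1, x_1+\eta,\dots$, on each of which $p'=-\epsilon$. The grid is spaced by exactly $\eta$ because SignSGD moves by exactly $\pm\eta$ or $0$ per step.

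\textbf{Oracle.} At any $x$, each sample is $g^b = 0$ with probability $q=\sigma^2/(\sigma^2+\epsilon^2)$ and $g^b=\frac{\sigma^2+\epsilon^2}{\epsilon^2}p'(x)$ otherwise. This sample is unbiased, and its variance is $\frac{(\sigma^2+\epsilon^2)}{\epsilon^2}p'(x)^2 - p'(x)^2 \le \sigma^2$ whenever $|p'(x)|\le\epsilon$. This bound does \emph{not} hold everywhere, since $|p'|$ is larger on the right tail. To fix this, we will use a position-independent magnitude: outside the grid we fall back to $g=p'(x)$ deterministically (zero variance), or equivalently we cap the scale. Because every nonzero sample has the sign of $p'(x)$, the minibatch average $g_t$ is either $0$ or has the correct sign. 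On the grid $p'<0$, so SignSGD either stays put or moves right by exactly $\eta$, which keeps the iterates on the grid. The iterate advances only when at least one of the $B$ samples is nonzero, which happens with probability $1-q^B\le B(1-q) = B\epsilon^2/(\sigma^2+\epsilon^2)$.

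\textbf{Counting.} Let $K$ be the number of advances during $T$ steps. Then $\E K\le TB\epsilon^2/(\sigma^2+\epsilon^2) = N\epsilon^2/(\sigma^2+\epsilon^2)$, and by Markov, $\Pr(K\ge n)\le \E K/n$. If $K<n$ then every iterate lies on the grid and $\min_t|p'(x_t)|=\epsilon$, so $\E\min_t|p'(x_t)|\ge\epsilon(1-\E K/n)$. Under the hypothesis $N\le L\Delta(4\epsilon^2+\sigma^2)/(128\epsilon^4)$ we get $\E K/n \lesssim \frac{L\Delta(\sigma^2+4\epsilon^2)}{128\epsilon^2(\sigma^2+\epsilon^2)}\cdot\frac{4\epsilon^2}{L\Delta}\le \frac{1}{8}$, with floor effects absorbed since $\epsilon\le\sqrt{L\Delta}$ forces $n\ge1$ up to constants. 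This yields $\E\min|p'|\ge c\,\epsilon$; the constant $1/128$ leaves room to reach exactly $\epsilon$ after adjusting, or a constant fraction suffices for the $C$ form.

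\textbf{Choosing $\epsilon$.} For the final display, we choose $\epsilon$ as large as the hypothesis permits. Taking $\epsilon = c(L\Delta\sigma^2/N)^{1/4}$ satisfies $N\le L\Delta\sigma^2/(128\epsilon^4)$. Taking $\epsilon=c\sqrt{L\Delta/N}$ satisfies $N\le 4L\Delta/(128\epsilon^2)$. Both choices need $\epsilon\le\sqrt{L\Delta}$. This holds for the second because $N\ge1$. For the first it holds when $\sigma^2\le L\Delta N$; otherwise we use $\epsilon=\sqrt{L\Delta}$ directly, which still exceeds the deterministic term. The max of the two choices gives the claim.

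\textbf{Main obstacle.} The hard part is making the oracle's variance bound global while preserving the stall mechanism. This includes handling the boundary point $x_{n+1}$ and ensuring the iterate never moves left or skips past a grid point. Exact $\pm\eta$ steps on the grid resolve the latter. For the former, we will insist that the noisy oracle is used only where $|p'|\le\epsilon$ and that the deterministic oracle is used elsewhere.
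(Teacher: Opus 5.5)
Your proposal is essentially the paper's proof. Both use the resisting function of \cref{lemma:ssgd:1}; the paper normalizes $L=\Delta=1$, while you rescale explicitly. Both use the bimodal oracle, so every nonzero sample has the sign of $p'$ and SignSGD either stalls or advances by exactly $\eta$ along the grid. Both apply Markov's inequality to the number of minibatches containing a nonzero sample, and both make the two choices $\epsilon\asymp\sqrt{L\Delta/N}$ and $\epsilon\asymp(L\Delta\sigma^2/N)^{1/4}$.

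The differences are minor. The paper uses grid slope $-2\epsilon$ (oracle with $4\epsilon^2$), $n=L\Delta/(16\epsilon^2)$, and Markov at level $2\E[M]$. The iterates then stall with probability at least $1/2$, and (iii) comes out as exactly $\tfrac12\cdot 2\epsilon=\epsilon$. Your slope-$\epsilon$ version stalls with probability at least $7/8$, giving $\tfrac78\epsilon$, which is enough for the $C$-form.

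Two of your refinements repair things the paper glosses over.
\begin{itemize}
\item The paper applies its noisy oracle at every $x$. Its variance $\sigma^2p'(x)^2/(4\epsilon^2)$ exceeds $\sigma^2$ wherever $\abs{p'(x)}>2\epsilon$, for instance on the right tail once the iterate leaves the grid. Your switch to the exact gradient off the grid is therefore genuinely needed.
\item Your floor concern goes away if you also use $x_{n+1}$. The construction of \cref{lemma:ssgd:1} gives $p'(x_{n+1})=-\epsilon$ as well, and it works with $n=0$. Stalling then only needs $K\le n$. Since $n+1>L\Delta/(4\epsilon^2)$, you get $\Pr(K\ge n+1)\le \E[K]/(n+1)\le 1/8$ with no case split.
\end{itemize}

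The step that fails is your last sentence. When $\sigma^2\gtrsim L\Delta N$, your fallback $\epsilon=\sqrt{L\Delta}$ gives only $c\sqrt{L\Delta}$. That is not $\Omega\bigl((L\Delta\sigma^2/N)^{1/4}\bigr)$, so ``the max of the two choices gives the claim'' is false there.

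No construction can close this gap. Any $p$ with $L$-Lipschitz derivative and $p(x_1)-\inf p\le\Delta$ satisfies $p'(x_1)^2\le 2L\Delta$. Hence $\E[\min_t\abs{p'(x_t)}]\le\abs{p'(x_1)}\le\sqrt{2L\Delta}$, and the displayed bound is false once $\sigma^2\gg L\Delta N$.

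The paper's proof has the same hole and does not remark on it. Its choice $\epsilon=(L\Delta\sigma^2/(128N))^{1/4}$ violates the hypothesis $\epsilon\le\sqrt{L\Delta}$ of (iii) exactly when $\sigma^2>128L\Delta N$. What both arguments actually establish is
$\E[\min_t\abs{p'(x_t)}]\ge C\min\bigl\{\sqrt{L\Delta},\,\max\{(L\Delta\sigma^2/N)^{1/4},\sqrt{L\Delta/N}\}\bigr\}$. Equivalently, the stated display holds under the restriction $\sigma^2\le 128L\Delta N$. You should state it that way rather than claim the unrestricted bound.
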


\begin{proof}[Proof of \cref{lemma:ssgd:2}]
    We set $x_1 = \eta$. Without loss of generality, we can assume that $L = 1$ and $\Delta = 1$. 
    
    Now we define $N := \nicefrac{1}{16\epsilon^2}$, which surely satisfies the requirement in~\eqref{eq:lem-ssgd-2-ass} with $2\epsilon$. According to \cref{lemma:ssgd:1}, there exists a function $p : \R \to \R$ such that (i) its gradient is 1-Lipschitz; (ii) $p(x_1) - \inf p \le 1$; (iii) $p'(x_t) = -2\epsilon$ for any $1 \le t \le N+1$.
    
    Consider running \hyperref[alg:ssgd]{SignSGD} on function p with the stochastic gradient oracle $g_t^b$ given as below:
    \begin{align*}
        \Pr \brac{g_t^b = 0 \mid x_t} = \dfrac{\sigma^2}{\sigma^2 + 4\epsilon^2}, \Pr \brac{g_t^b = \dfrac{\sigma^2 + 4\epsilon^2}{4\epsilon^2} p'(x_t) \mid x_t} = \dfrac{4\epsilon^2}{\sigma^2 + 4\epsilon^2}. \quad b \in [B]
    \end{align*}
    It is straightforward to verify that $g_t^b$ satisfies the requirement. 
        
    Now we denote $m_t$ as the number of steps that $g_t$ is non-zero during the first $t$ steps with $m_{0} = 0$, which means that for $m_t$ steps the $x_t$ goes forward by $\eta$ and $t - m_t$ steps the algorithm does nothing to $x_t$ since $g_t = 0, \sign{g_t} = 0$. We denote $M = m_T$. By definition, we have
    \begin{align*}
        \E (M) = T \times \brac{1 - \brac{1-\dfrac{4\epsilon^2}{\sigma^2 + 4\epsilon^2}}^B} \le T \times \brac{1 - 1 + \dfrac{4B\epsilon^2}{\sigma^2 + 4\epsilon^2}} = \dfrac{4\epsilon^2}{\sigma^2 + 4\epsilon^2}BT.
    \end{align*}
    According to Markov’s inequality we have
    \begin{align*}
        \Pr (M \ge 2\E[M]) \le \dfrac{1}{2},
    \end{align*}
    which means that with probability at least $\dfrac{1}{2}$, we have $M \le 2\E[M] \le n$. Moreover, we can use  induction to prove that $x_t = \eta + \eta M_{t-1}$, which means that $p'(x_t) = -2\epsilon$. Finally, we can bound
    \begin{align*}
        \expect{\min_{t} |p'(x_t)|} \ge \dfrac{1}{2}\expect{\min_{t} |p'(x_t)| \mid M \le n} = \epsilon.
    \end{align*}
    Since the equation above holds for any $N = TB \le L\Delta\nicefrac{\brac{4\epsilon^2 + \sigma^2}}{128\epsilon^4}$, by setting $\epsilon = \sqrt{\nicefrac{L\Delta}{32N}}$, we get
    \begin{equation}
        N = \dfrac{L\Delta}{32\epsilon^2} \le L\Delta\dfrac{4\epsilon^2 + \sigma^2}{128\epsilon^4}.
    \end{equation}
    Similarly we can set $\epsilon = \brac{\nicefrac{L\Delta\sigma^2}{128N}}^{\nicefrac{1}{4}}$ to satisfy the condition. Therefore for some constant $C$, we have
    \begin{equation}
        \expect{\min_t |p'(x_t)|} \ge C\max \left\{ \brac{\dfrac{L\Delta\sigma^2}{N}}^{\frac{1}{4}}, \sqrt{\dfrac{L\Delta}{N}}\right\}.
    \end{equation}
    This completes the proof.
\end{proof}

\cref{lemma:ssgd:2} states the complexity lower bound for \cref{alg:ssgd} for a one-dimensional function. Now we can extend to a dimension $d$ result, which will yield the bound in \cref{thm:ssgd_lower_infty}.

\begin{proof}[Proof of \cref{thm:ssgd_lower_infty}]
    First by applying \cref{lemma:ssgd:2} to each coordinate, we have
    \begin{align*}
        \expect{\min_{t} \norm{\nabla f(\x_t)}_1} \ge \SumD \max \left\{ \brac{\dfrac{L_i\Delta_i\sigma_i^2}{N}}^{\frac{1}{4}}, \sqrt{\dfrac{L_i\Delta_i}{N}}\right\}.
    \end{align*}
    By choosing $\Delta_i = \nicefrac{L_i \Delta}{\norm{L}_1}$, we have:
    \begin{align*}
        \expect{\min_{t} \norm{\nabla f(\x_t)}_1} &\ge \SumD\sqrt{\dfrac{L_i\Delta_i}{N}} = \SumD L_i \sqrt{\dfrac{\Delta}{\norm{L}_1 N}}= \sqrt{\dfrac{\norm{L}_1\Delta}{N}}.
    \end{align*}
    Then, by choosing $\Delta_i = \dfrac{\sigma_i^{\frac{2}{3}}L_i^{\frac{1}{3}}}{\sum_{j=1}^d\sigma_j^{\frac{2}{3}}L_j^{\frac{1}{3}}}\Delta$, we get
    \begin{align*}
        \expect{\min_{t} \norm{\nabla f(\x_t)}_1} 
        &\ge \SumD\brac{\dfrac{L_i\Delta_i\sigma_i^2}{N}}^{\frac{1}{4}}=\SumD\brac{\dfrac{\sigma_i^{\frac{8}{3}}L_i^{\frac{4}{3}}\Delta}{\sum_{j=1}^d\sigma_j^{\frac{2}{3}}L_j^{\frac{1}{3}}N}}^{\frac{1}{4}}\\
        &= \brac{\dfrac{\brac{\SumD\sigma_i^{\frac{2}{3}}L_i^{\frac{1}{3}}}^3\Delta}{N}}^{\frac{1}{4}}.
    \end{align*}
    Further, by setting $L_i = \brac{\nicefrac{\sigma_i}{\norm{\vsigma}_1}} \norm{\L}_1= \dfrac{\sigma_i}{\norm{\vsigma}_1}\Linf$, we get
    \begin{align*}
        \expect{\min_{t} \norm{\nabla f(\x_t)}_1} 
        &\ge \brac{\dfrac{\norm{\vsigma}_1^2L\Delta}{N}}^{\frac{1}{4}}.
    \end{align*}
    This completes the proof.
\end{proof}

\section{Proof of Theorem~\ref{thm:sgd_lower_infty}\label{sec:proof-sgd-lower-bound}}

\begin{proof}
    In~\citet{jiang2024convergence}, the lower bound of SGD is established as follows.
    \begin{lemma}
    \label{lem:sgd_lower_coordinate}
        (\citet[Theorem~4.1]{jiang2024convergence}) Run vanilla SGD for $T$ iterations with a batch size $B$, there exists a function which satisfies \cref{ass:bounded from below} with $f(\x_1) - \inf f \le \Delta$,  \cref{ass:separable smoothness} with $\L = [L_1, L_2, \dots, L_d]$, and  \cref{ass:variance} with $\vsigma = [\sigma_1, \sigma_2 , \dots, \sigma_d]$, and we have:
        \begin{equation}\label{eq:lower-sgd-coordinate}
            \expect{\min_{t} \norm{\nabla f(\x_t)}_1}
            = \Omega\brac{
            {\sqrt{\dfrac{d \norm{\L}_\infty \Delta}{T}}} + \brac{\dfrac{d\Delta (\SumD \sigma_i \sqrt{L_i})^2}{T}}^{\frac{1}{4}}
            }.
        \end{equation}
    \end{lemma}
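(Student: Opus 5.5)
The plan is to build a separable hard instance $f(\x)=\SumD p_i(\x_i)$ and exploit the one structural weakness of vanilla SGD: a \emph{single} scalar step size $\eta$ is shared by all coordinates. The adversary fixes $\eta$ and $T$ first and then tailors each $p_i$. Separability makes \cref{ass:separable smoothness} hold as soon as each $p_i$ has an $L_i$-Lipschitz derivative. It also makes \cref{ass:variance} hold as soon as each coordinate's oracle noise has variance at most $\sigma_i^2$, with coordinates drawn independently. The target $\norm{\nabla f(\x_t)}_1=\SumD\abs{p_i'(\x_{t,i})}$ then splits into one-dimensional lower bounds that we sum, after distributing $\Delta=\SumD\Delta_i$. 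We treat $B=1$ and replace $T$ by $N$ otherwise. Throughout we assume $T\gtrsim d$, since otherwise the claimed rate exceeds the trivial scale and a single quadratic already suffices.

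\textbf{Step 1 (small step sizes: a slow-drift instance).} For each coordinate, take a 1D function in the spirit of \cref{lemma:ssgd:1}. It has constant slope $-\epsilon_i$ on a long interval, then bends upward with curvature at most $L_i$ so that it stays bounded below. Along the interval, SGD moves in expectation by $\eta\epsilon_i$ per step. After $T$ steps it has covered distance about $T\eta\epsilon_i$ and decreased $p_i$ by about $T\eta\epsilon_i^2$. The interval is made long enough to cover this distance, and the fluctuation from noise is controlled by a Markov or Chebyshev bound as in the proof of \cref{lemma:ssgd:2}. Requiring the decrease to be at most $\Delta_i$ gives $\epsilon_i\approx\sqrt{\Delta_i/(T\eta)}$. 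With $\Delta_i=\Delta/d$, summing over coordinates yields
\begin{equation*}
\E\Big[\min_t \norm{\nabla f(\x_t)}_1\Big]\gtrsim\sqrt{\frac{d\Delta}{T\eta}} .
\end{equation*}
In particular, if $\eta\le 1/\norm{\L}_\infty$ this is at least $\sqrt{d\norm{\L}_\infty\Delta/T}$.

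\textbf{Step 2 (large step sizes: overshoot and noise amplification).} Let $j$ be the coordinate with $L_j=\norm{\L}_\infty$. If $\eta\ge 1/\norm{\L}_\infty$, put on coordinate $j$ a symmetric piecewise-quadratic function with curvature of order $1/\eta\le L_j$. It is built so that the deterministic iterates bounce between $\pm\eta\epsilon/2$ with $\abs{p_j'}=\epsilon$, and $\epsilon\approx\sqrt{L_j\Delta}$. This already dominates $\sqrt{d\norm{\L}_\infty\Delta/T}$ when $T\gtrsim d$.

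For the noise term, place on every coordinate a quadratic $\tfrac{L_i}{2}x^2$ near its minimum, with Gaussian or bimodal noise of variance $\sigma_i^2$. The SGD recursion $x_{t+1}=(1-\eta L_i)x_t-\eta\xi_t$ has stationary variance of order $\eta\sigma_i^2/L_i$. Hence $\E\abs{p_i'(x_t)}\gtrsim\sigma_i\sqrt{\eta L_i}$ at every $t$ once the chain has mixed, and we start it at stationarity to avoid a burn-in. Stability forces $\eta L_i\lesssim1$; otherwise we are back in the divergent or overshoot regime of the first paragraph.

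To make the minimum over $t$ lose only a constant, we will use anti-concentration of $x_t$ plus a union-type argument, or a bimodal oracle for which $\abs{x_t}$ stays bounded below with constant probability at all times. This step is where I expect the main technical obstacle. The quantity $\min_t$ of a noisy chain can be much smaller than its typical value, and I would first try to restrict the noise to values making $\abs{x_t}$ lie on a lattice bounded away from $0$.

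\textbf{Step 3 (combining and balancing $\eta$).} Combine the two mechanisms in one instance per coordinate: a quadratic well of curvature $L_i$ glued to a linear slow-drift region, with half of $\Delta_i$ allotted to each. For every $\eta$, the resulting bound is
\begin{equation*}
\max\Big\{\sqrt{\frac{d\Delta}{T\eta}},\ \eta^{1/2}\SumD\sigma_i\sqrt{L_i}\Big\}.
\end{equation*}
Minimizing over $\eta$ (the two terms cross at $\eta^2=d\Delta/(T(\SumD\sigma_i\sqrt{L_i})^2)$) gives the uniform lower bound
\begin{equation*}
\Big(\frac{d\Delta(\SumD\sigma_i\sqrt{L_i})^2}{T}\Big)^{1/4}.
\end{equation*}
Together with the $\eta$-case split of Steps 1--2 for the deterministic term, and using $\max\{a,b\}=\Omega(a+b)$, this gives \eqref{eq:lower-sgd-coordinate}.
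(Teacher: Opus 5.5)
The paper gives no proof of this lemma; it imports it from \citet[Theorem~4.1]{jiang2024convergence}. Your reconstruction therefore has to stand on its own. The plan is sound in spirit (the slow drift of Step~1, exploiting the shared step size, balancing at $\eta^\star$ with $(\eta^\star)^2=d\Delta/(T(\SumD\sigma_i\sqrt{L_i})^2)$). The noise step, however, has a genuine gap.

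\textbf{The noise step.} The target is $\E[\min_t\norm{\nabla f(\x_t)}_1]$. Your stationary-variance computation for $x_{t+1}=(1-\eta L_i)x_t-\eta\xi_t$ only bounds $\E\abs{p_i'(\x_{t,i})}$ at each fixed $t$, i.e.\ $\min_t\E$. With continuous noise the chain keeps returning to the bottom of the well, so the running minimum of $\abs{\x_{t,i}}$ is far below its standard deviation (heuristically by a factor of order $T$). Summing over coordinates does not rescue this for sparse noise (a single noisy coordinate), which is the case the paper cares about. Also, ``start at stationarity'' is not free. Since $\x_1$ is fixed, it means placing $\x_{1,i}$ (or a random shift of $p_i$) at distance of order the stationary standard deviation from the minimizer. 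That costs about $\eta\sigma_i^2/4$, which your ``half of $\Delta_i$'' allotment never checks.

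\textbf{A construction that works.} In the spirit of your bimodal remark and of the oracle in \cref{lemma:ssgd:2}, use a two-point overshoot. For $\eta L_i\le 2$:
\begin{itemize}
\item take $p_i(x)=\frac{q}{\eta}x^2$ with $q=\eta L_i/2$, and start at $b=\eta\epsilon_i/(2q)$;
\item let the oracle return $p_i'(x)/q$ with probability $q$ and $0$ otherwise, so its variance is $\epsilon_i^2(1-q)/q$.
\end{itemize}
Every nonzero step maps $\pm b$ exactly to $\mp b$, so $\abs{p_i'(\x_{t,i})}=\epsilon_i$ surely for all $t$. Choosing $\epsilon_i^2=\sigma_i^2\eta^\star L_i/2$ keeps the variance at most $\sigma_i^2$ for every $\eta\ge\eta^\star$. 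Each coordinate then contributes $\sigma_i\sqrt{\eta^\star L_i/2}$ at cost $\eta^\star\sigma_i^2/4$. For $\eta L_i>2$, the quadratic $\frac{L_i}{2}x^2$ with an exact oracle does the same job, since GD does not contract on it.

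\textbf{Three further points.}

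First, the ``glued'' instance of Step~3 does not give the maximum of the two bounds, because the iterate only ever sees the region it starts in. Instead, the adversary, who knows $\eta$, uses the slow-drift instance when $\eta\le\eta^\star$ and the instance above when $\eta>\eta^\star$.

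Second, the bounce of Step~2 needs $\eta\ge 2/L_j$, since the slope must flip from $-\epsilon$ to $\epsilon$ across a distance $\eta\epsilon$. With curvature $2/\eta$ and cost $\eta\epsilon^2/4$, it only yields $\epsilon\approx\sqrt{\Delta/\eta}$, far below $\sqrt{L_j\Delta}$ when $\eta\gg 1/L_j$. Use $\frac{L_j}{2}x^2$ itself for $\eta\ge 2/L_j$, and Step~1 below that.

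Third, the statement needs hypotheses that your proof uses silently.
\begin{itemize}
\item Step~1's upward bend costs $\epsilon_i^2/(2L_i)$, which fits in $\Delta_i$ only if $L_i\gtrsim 1/(T\eta)$. Otherwise $\epsilon_i$ must be capped near $\sqrt{L_i\Delta_i}$ and the $d$-factor is lost. For example, for $\L=(L_1,0,\dots,0)$ the function cannot depend on $x_2,\dots,x_d$ at all.
\item Every admissible $f$ satisfies $\norm{\nabla f(\x_1)}_1\le\sqrt{2\norm{\L}_1\Delta}$, by $\SumD\nabla_i f(\x_1)^2/L_i\le 2\Delta$ and Cauchy--Schwarz. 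So the claimed bound is false when $T$ is small compared with $d\norm{\L}_\infty/\norm{\L}_1$ or with $d(\SumD\sigma_i\sqrt{L_i})^2/(\norm{\L}_1^2\Delta)$.
\end{itemize}
Your remark that a single quadratic ``already suffices'' in the small-$T$ regime is therefore wrong: no instance does. These regimes must be excluded by hypothesis. The construction above additionally needs $\eta^\star\norm{\vsigma}_2^2\lesssim\Delta$, which is again a lower bound on $T$.
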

    Now with \cref{lem:smooth-equivalence}, we know that any $\L$ separable smooth function $f$ is also $\ell_\infty$-smooth with $\Linf = \norm{\L}_1$. So we can properly set $L_i$ to maximize~\eqref{eq:lower-sgd-coordinate} to get the lower bound of SGD under  \cref{ass:infty smoothness}.
    
    For the first term, let's consider a case where $\Linf = \norm{\L}_1$ is imbalanced and dominated by a certain coordinate, for example, we have 
    \begin{align*}
        L_1 > \dfrac{1}{2} \norm{\L}_1, \norm{\L}_\infty = L_1 \ge  \dfrac{1}{2} \norm{\L}_1 = \dfrac{1}{2} \Linf.
    \end{align*}
    So under this case, 
    \begin{equation}\label{eq:sgd_lower_deter}
            \expect{\min_{t} \norm{\nabla f(\x_t)}_1}
            = \Omega\brac{
            {\sqrt{\dfrac{d \norm{\L}_\infty \Delta}{T}}}
            }
            = \Omega\brac{
            {\sqrt{\dfrac{d \Linf \Delta}{T}}}
            }.
    \end{equation}
    Then, for the second term, by setting $L_i = \brac{\nicefrac{\sigma_i^2}{\norm{\vsigma}_2^2}} \Linf$, we get
    \begin{equation}\label{eq:sgd_lower_sto}
            \expect{\min_{t} \norm{\nabla f(\x_t)}_1}
            = \Omega\brac{
            \brac{\dfrac{d\Linf \Delta\norm{\vsigma}_2^2}{T}}^{\frac{1}{4}}
            }.
    \end{equation}
    Putting the two cases in~\eqref{eq:sgd_lower_deter} and~\eqref{eq:sgd_lower_sto} together and we can obtain~\eqref{eq:lower_sgd_infty}. This completes the proof.
\end{proof}

%MUON--------------------------------------------%

\section{Analysis for Muon}
\subsection{Upper Bound for Muon\label{sec:proof-muon-upperbound}}

\begin{proof}[Proof of Theorem~\ref{thm:muon_upper_infty}]
    Under \cref{ass:Spectral norm smooth}, we have:
    \begin{align*}
        F(\WB_{t+1}) &\le F(\WB_t) + \inner{\nabla F(\WB_t)}{\WB_{t+1} - \WB_t} + \dfrac{L_*}{2}\norm{\WB_{t+1} - \WB_t}_{\op}^2\\
        &= F(\WB_t) - \inner{\nabla F(\WB_t)}{\eta\msign{\Gb_t}} + \dfrac{L_*}{2}\norm{\eta\msign{\Gb_t}}_{\op}^2\\
        &= F(\WB_t) - \eta\inner{\Gb_t}{\msign{\Gb_t}} - \eta\inner{\nabla F(\WB_t) - \Gb_t}{\msign{\Gb_t}} + \dfrac{L_*\eta^2}{2}\\
        &\le F(\WB_t) - \eta \norm{\Gb_t}_* +\eta \norm{\nabla F(\WB_t) - \Gb_t}_*\norm{\msign{\Gb_t}}_{\op} + \dfrac{L_*\eta^2}{2}\\
        &\le F(\WB_t) - \eta \norm{\nabla F(\WB_t)}_* +2\eta \norm{\nabla F(\WB_t) - \Gb_t}_* + \dfrac{L_*\eta^2}{2},
    \end{align*}
    where the second inequality is due to $\abs{\inner{A}{B}} \le \norm{A}_{*}\norm{B}_{\op}$. Summing over $t = 0, \dots, T-1$, we get
    \begin{equation}
        F(\WB_T) - F(\WB_1) \le -\eta \sum_{t=0}^{T-1} \norm{\nabla F(\WB_t)}_*  + 2\eta \sum_{t=0}^{T-1} \norm{\nabla F(\WB_t) - \Gb_t}_* + \dfrac{L_* \eta^2 T}{2}.
    \end{equation}
    Under \cref{ass:Matrix-bounded-below}, $F(\WB_1) - F(\WB_T)\le F(\WB_1) - \inf F \le \Delta$, take expectation on both sides, and we get
    \begin{equation}\label{eq:muon-convergence}
        \dfrac{1}{T}\sum_{t=0}^{T-1} \expect{\norm{\nabla F(\WB_t)}_*} \le \dfrac{\Delta}{\eta T}  + \dfrac{2}{T} \sum_{t=0}^{T-1} \expect{\norm{\nabla F(\WB_t) - \Gb_t}_*} + \dfrac{L_*\eta}{2}.
    \end{equation}
    Define $\NB_t = \nabla F(\WB_t) - \Gb_t$. Consider Batch size $B$, we denote the gradient on each sample is $\Gb_t^b$, $1 \le b \le B$, and the noise on each sample as $\NB_t^b = \nabla F(\WB_t) - \Gb_t^b$. We have the following lemma:
    \begin{lemma}\label{lem:variance-reduction-matrix}
        (\citet[Lemma 9]{an2025asgo}) Under \cref{ass:Matrix-variance}, with batch size $B$, we have
        \begin{equation}
            \expect{\NB_t\NB_t^\top} \preceq \dfrac{\vSigma^2}{B}.
        \end{equation}
    \end{lemma}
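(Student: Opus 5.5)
The plan is to expand the mini-batch noise as an average of the per-sample noises and use conditional independence to kill the cross terms, exactly as in the scalar variance-of-the-mean argument, but in the Loewner order. Since $\Gb_t = \nicefrac{1}{B}\sum_{b=1}^B \Gb_t^b$, we have $\NB_t = \nabla F(\WB_t) - \Gb_t = \nicefrac{1}{B}\sum_{b=1}^B \NB_t^b$. The sign convention for $\NB_t^b$ differs from \cref{ass:Matrix-variance}, but it is irrelevant because only the products $\NB_t^b(\NB_t^b)^\top$ enter. I would first write
\begin{equation*}
    \NB_t\NB_t^\top = \dfrac{1}{B^2}\sum_{b=1}^B \NB_t^b(\NB_t^b)^\top + \dfrac{1}{B^2}\sum_{b\neq b'} \NB_t^b(\NB_t^{b'})^\top .
\end{equation*}
I would then take conditional expectations given $\F_{t-1}$.

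For the cross terms, I will use that $\WB_t$ is $\F_{t-1}$-measurable. The samples $\Gb_t^b$ are mutually independent given $\F_{t-1}$, and each is conditionally unbiased. Hence, for $b\neq b'$,
\begin{equation*}
    \expect{\NB_t^b(\NB_t^{b'})^\top \mid \F_{t-1}} = \expect{\NB_t^b\mid\F_{t-1}}\,\expect{\NB_t^{b'}\mid\F_{t-1}}^\top = \0 .
\end{equation*}
For the diagonal terms, \cref{ass:Matrix-variance} gives $\expect{\NB_t^b(\NB_t^b)^\top\mid\F_{t-1}}\preceq\vSigma^2$ for each $b$. Since the Loewner order is preserved under sums and nonnegative scaling, this yields $\expect{\NB_t\NB_t^\top\mid\F_{t-1}}\preceq \nicefrac{B\vSigma^2}{B^2}=\nicefrac{\vSigma^2}{B}$.

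Finally, I would pass to the unconditional expectation. For any fixed vector $\mathbf{v}$, the scalar inequality $\expect{\mathbf{v}^\top\NB_t\NB_t^\top\mathbf{v}\mid\F_{t-1}}\le \nicefrac{\mathbf{v}^\top\vSigma^2\mathbf{v}}{B}$ survives the tower property. This gives $\expect{\NB_t\NB_t^\top}\preceq\nicefrac{\vSigma^2}{B}$.

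The only delicate step is the vanishing of the cross terms. It requires reading ``mutually independent'' in \cref{ass:Matrix-variance} as independence \emph{conditional} on $\F_{t-1}$, so that the conditional expectation factorizes. It also requires integrability of the products, which follows from the finite second moments and Cauchy--Schwarz. I would state this interpretation explicitly; the remaining steps are routine.
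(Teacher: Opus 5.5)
Your proof is correct and is the standard mini-batch computation. You expand $\NB_t=\frac{1}{B}\sum_{b}\NB_t^b$, kill the cross terms by conditional independence and unbiasedness given $\F_{t-1}$, add the $B$ diagonal bounds in the Loewner order, and pass to the unconditional expectation via the tower property. The paper gives no proof of its own here and simply imports the bound from \citet[Lemma~9]{an2025asgo}. Your self-contained version is a faithful substitute, and it also makes explicit the conditional-independence reading of \cref{ass:Matrix-variance} that the paper leaves implicit.
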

    Under the fact that the map $X \mapsto \Tr{X^{\frac{1}{2}}}$ is concave on the positive semi-definite cone, we can apply Jensen's inequality to get
    \begin{equation}\label{eq:variance-reduction-matrix}
        \expect{\norm{\NB_t}_*} = \expect{\Tr{\sqrt{\NB_t\NB_t^\top}}} \le \Tr{\sqrt{\expect{\NB_t\NB_t^\top}}} \le \dfrac{\norm{\vSigma}_*}{\sqrt{B}}.
    \end{equation}
    Plugging~\eqref{eq:variance-reduction-matrix} back to~\eqref{eq:muon-convergence} and set $\eta = \sqrt{\nicefrac{2\Delta}{L_* T}}$, we get
    \begin{equation}\label{eq:muon_convegence_wo_eta}
        \dfrac{1}{T}\sum_{t=0}^{T-1} \expect{\norm{\nabla F(\WB_t)}_*} 
        \le \sqrt{\dfrac{2\Delta L_*}{T}} + 2\dfrac{\norm{\vSigma}_*}{\sqrt{B}}.
    \end{equation}
    If $T \le \nicefrac{\Delta L_*}{\norm{\vSigma}_*^2}$, By setting $B = 1$, we have
    \begin{align*}
        \expect{\dfrac{1}{T}\sum_{t=1}^T\norm{\nabla F(\WB_t)}_*}
            \le \sqrt{\dfrac{2\Delta L_*}{T}} + 2\norm{\vSigma}_* \le (2+\sqrt{2}) \sqrt{\dfrac{\Delta L_*}{N}}
    \end{align*}
    Otherwise, we set $B = \nicefrac{\norm{\vSigma}_*^2}{\Delta L_*}T$, which implies
    \begin{equation}\label{eq:muon_deter}
        \sqrt{\dfrac{2\Delta L_*}{T}} = \brac{\dfrac{4\norm{\vSigma}_*^2 \Delta L_*}{BT}}^{\frac{1}{4}}.
    \end{equation}
    Bringing~\eqref{eq:muon_deter} back to~\eqref{eq:muon_convegence_wo_eta}
    \begin{align*}
        \expect{\dfrac{1}{T}\sum_{t=1}^T\norm{\nabla F(\WB_t)}_*}
        &\le 2\brac{\dfrac{\norm{\vSigma}^2_*\Delta L_*}{BT}}^{\frac{1}{4}} + \brac{\dfrac{4\norm{\vSigma}_*^2 \Delta L_*}{BT}}^{\frac{1}{4}}\\
        &= \bigO{\brac{\dfrac{\norm{\vSigma}_*^2 \Delta L_*}{BT}}^{\frac{1}{4}}}.
    \end{align*}
    Putting the above two cases together, we have
    \begin{align*}
        \expect{\dfrac{1}{T}\sum_{t=1}^T\norm{\nabla F(\WB_t)}_*} = \bigO{{\sqrt{\dfrac{\Delta L_*}{N}}} + \brac{\dfrac{\norm{\vSigma}_*^2\Delta L_*}{N}}^{\frac{1}{4}}}.
    \end{align*}
\end{proof}

\subsection{Lower Bound for Muon\label{sec:proof-muon-lowerbound}}

Assume the target weight matrix is $\WB \in \R^{m \times n}$. Without loss of generality, we assume $m \le n$. We introduce a projection matrix $\PB = [\Ib_m, \mathbf{0}_{m \times (n-m)}] \in \R^{m \times n}$ which pads an $m \times m$ matrix with zeros to match the $m \times n$ dimension. Let $\vSigma \in \R^{m \times m}$ be the target matrix noise covariance, which can be diagonalized via an orthogonal matrix $\QB \in \R^{m \times m}$ as $\vSigma = \QB^\top \diag(\vsigma) \QB$. 

We construct our worst-case matrix objective function by extracting the diagonal elements of the transformed matrix $\QB\WB\PB^\top \in \R^{m \times m}$:
\begin{equation}
    F(\WB) = \sum_{i=1}^m f_i\big((\QB\WB\PB^\top)_{ii}\big),
\end{equation}
where $f(\x) = \sum_{i=1}^m f_i(\x_i)$ is the hard 1D vector instance constructed in \cref{thm:ssgd_lower_infty}. 

First, we provide two lemmas to establish the strict geometric equivalence between the vector domain and the matrix domain.

\begin{lemma}\label{lem:variance_equivalance}
    Assume that the separable vector function $f(\x) : \R^m \to \R$ satisfies \cref{ass:variance} with bounded variance $\vsigma^2$. Then the constructed matrix function $F(\WB) : \R^{m \times n} \to \R$ satisfies \cref{ass:Matrix-variance} with matrix noise covariance $\vSigma^2 = \QB^\top\diag(\vsigma^2)\QB$.
\end{lemma}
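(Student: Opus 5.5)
The plan is to build the matrix stochastic gradient oracle explicitly from the vector oracle and then compute its noise second moment directly. Given the vector oracle $\g_t^b$ for $f$ at the point $\x_t$ with coordinates $\x_{t,i} = (\QB\WB_t\PB^\top)_{ii}$, define
\[
\Gb_t^b = \QB^\top \diag(\g_t^b)\, \PB .
\]
This is the natural choice given Step 2 of the proof sketch.

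The first step is unbiasedness. By the chain rule, $F(\WB) = \sum_i f_i(\langle \QB^\top \e_i \e_i^\top \PB, \WB\rangle)$, since $(\QB\WB\PB^\top)_{ii} = \Tr{\PB^\top \e_i \e_i^\top \QB \WB}$. Hence
\[
\nabla F(\WB) = \sum_i f_i'(\x_i)\, \QB^\top \e_i \e_i^\top \PB = \QB^\top \diag(\nabla f(\x))\, \PB .
\]
Linearity of conditional expectation then gives $\E[\Gb_t^b \mid \F_{t-1}] = \nabla F(\WB_t)$. The vector iterate $\x_t$ is $\F_{t-1}$-measurable, so the vector oracle's unbiasedness transfers directly.

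The second step is the covariance bound. Write $\NB_t^b = \QB^\top \diag(\bm{\xi})\PB$, where $\bm{\xi} = \g_t^b - \nabla f(\x_t)$. Since $\PB\PB^\top = \Ib_m$,
\[
\NB_t^b(\NB_t^b)^\top = \QB^\top \diag(\bm{\xi})^2 \QB = \QB^\top \diag(\xi_1^2,\dots,\xi_m^2)\,\QB .
\]
Taking conditional expectations and applying \cref{ass:variance} coordinatewise gives
\[
\E[\diag(\bm{\xi})^2 \mid \F_{t-1}] = \diag\bigl(\E[\xi_i^2 \mid \F_{t-1}]\bigr) \preceq \diag(\vsigma^2).
\]
Congruence by the orthogonal $\QB$ preserves the Loewner order, so $\E[\NB_t^b(\NB_t^b)^\top \mid \F_{t-1}] \preceq \QB^\top\diag(\vsigma^2)\QB = \vSigma^2$, where $\vSigma = \QB^\top\diag(\vsigma)\QB \in \SBB^m$. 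Independence across $b$ is inherited because each $\Gb_t^b$ is a deterministic function of $\g_t^b$.

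The main obstacle is bookkeeping rather than depth. Off-diagonal products $\xi_i\xi_j$ never appear, because the noise is placed on a diagonal matrix and $\PB\PB^\top = \Ib$ collapses the right factor; this is exactly why the matrix bound needs only coordinatewise variances and no independence across coordinates. I would also make the convention explicit that $\vsigma^2$ is taken entrywise, so that $\vSigma^2 = \QB^\top\diag(\vsigma)^2\QB$ matches the stated form.
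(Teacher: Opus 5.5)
Your proposal is correct and follows the paper's proof essentially step for step. You use the same oracle $\Gb = \QB^\top \diag(\g)\PB$, the same chain-rule formula for $\nabla F$, and the same use of $\PB\PB^\top = \Ib_m$ to reduce the noise second moment to $\QB^\top \E[\diag(\bm{\xi})^2]\QB \preceq \vSigma^2$. Your explicit checks of unbiasedness and of Loewner-order preservation under congruence by $\QB$ fill in details the paper leaves implicit.
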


\begin{proof}
    By the chain rule, the exact gradient of $F(\WB)$ is given by:
    \begin{equation}\label{eq:matrix-true-grad}
        \nabla F(\WB) = \QB^\top \diag\big(\nabla f(\x)\big) \PB,
    \end{equation}
    where $\x \in \R^m$ with $\x_i = (\QB\WB\PB^\top)_{ii}$. Let $\g \in \R^m$ be the stochastic gradient oracle for $f(\x)$. We construct the corresponding stochastic matrix gradient $\Gb$ for $F(\WB)$ as:
    \begin{equation}\label{eq:matrix-stoc-grad}
        \Gb = \QB^\top \diag(\g) \PB.
    \end{equation}
    The estimation error matrix is $\Gb - \nabla F(\WB) = \QB^\top \diag(\g - \nabla f(\x)) \PB$. Computing the uncentered covariance, we have:
    \begin{equation*}
    \begin{aligned}
        \expect{(\Gb - \nabla F(\WB))(\Gb - \nabla F(\WB))^\top} 
        &= \expect{\QB^\top \diag(\g - \nabla f(\x)) \PB \PB^\top \diag(\g - \nabla f(\x)) \QB} \\
        &= \QB^\top \expect{\diag\big((g_i - \nabla f_i(\x_i))^2\big)} \QB.
    \end{aligned}
    \end{equation*}
    Here we used the property that $\PB \PB^\top = \Ib_m$. Under \cref{ass:variance}, we have $\expect{(g_i - \nabla f_i(\x_i))^2} \le \sigma_i^2$, which implies:
    \begin{equation*}
        \expect{(\Gb - \nabla F(\WB))(\Gb - \nabla F(\WB))^\top} \preceq \QB^\top\diag(\vsigma^2)\QB = \vSigma^2.
    \end{equation*}
    This completes the proof.
\end{proof}

\begin{lemma}\label{lem:vector-matrix-smoothness_equivalance}
    Assume that $f(\x) : \R^m \to \R$ satisfies \cref{ass:infty smoothness} with $\Linf$-smoothness parameter $L$. Then the constructed matrix function $F(\WB) : \R^{m \times n} \to \R$ satisfies \cref{ass:Spectral norm smooth} with spectral norm smoothness $L_* = L$.
\end{lemma}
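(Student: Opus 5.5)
We plan to reduce the matrix smoothness inequality directly to the vector one through the linear map $\mathcal{A}:\R^{m\times n}\to\R^m$, $\mathcal{A}(\WB)_i := (\QB\WB\PB^\top)_{ii}$. By construction $F(\WB) = f(\mathcal{A}(\WB))$. The proof has three steps: identify the Bregman divergence of $F$ with that of $f$, apply \cref{ass:infty smoothness} to $f$, and compare the resulting $\ell_\infty$-norm with the operator norm.

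\textbf{Step 1: Bregman divergences agree.} Because $\mathcal{A}$ is linear, the chain rule gives $\inner{\nabla F(\WB)}{\WB'-\WB} = \inner{\nabla f(\mathcal{A}(\WB))}{\mathcal{A}(\WB'-\WB)}$. This is the identity underlying \eqref{eq:matrix-true-grad}. It can be checked directly:
$$\Tr{\PB^\top \diag(\mathbf{v})\QB\,\mathbf{H}} = \Tr{\diag(\mathbf{v})\,\QB\mathbf{H}\PB^\top} = \sum_i v_i (\QB\mathbf{H}\PB^\top)_{ii}.$$
Writing $\x=\mathcal{A}(\WB)$ and $\y=\mathcal{A}(\WB')$, we therefore get $D_F(\WB,\WB') = D_f(\x,\y)$.

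\textbf{Step 2: vector smoothness.} Applying \cref{ass:infty smoothness} to $f$ with constant $L$ yields $\abs{D_F(\WB,\WB')} \le \frac{L}{2}\norm{\mathcal{A}(\WB'-\WB)}_\infty^2$.

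\textbf{Step 3: the norm comparison.} This is the only nontrivial point. We need $\norm{\mathcal{A}(\mathbf{H})}_\infty \le \norm{\mathbf{H}}_\op$ for every $\mathbf{H}\in\R^{m\times n}$. Each coordinate is $\mathcal{A}(\mathbf{H})_i = \e_i^\top \QB \mathbf{H}\PB^\top \e_i = \mathbf{q}_i^\top \mathbf{H}\mathbf{p}_i$, where $\mathbf{q}_i = \QB^\top\e_i$ and $\mathbf{p}_i = \PB^\top \e_i$. Since $\QB$ is orthogonal, $\norm{\mathbf{q}_i}_2=1$. Since $\PB=[\Ib_m,\mathbf{0}]$, $\mathbf{p}_i$ is a standard basis vector in $\R^n$, so $\norm{\mathbf{p}_i}_2=1$. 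By the variational characterization of the spectral norm, $\abs{\mathbf{q}_i^\top\mathbf{H}\mathbf{p}_i}\le\norm{\mathbf{H}}_\op$. Taking the maximum over $i$ gives the claim. Combining with Step 2:
$$\abs{D_F(\WB,\WB')}\le \tfrac{L}{2}\norm{\WB'-\WB}_\op^2,$$
which is \cref{ass:Spectral norm smooth} with $L_*=L$.

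No step is a serious obstacle. The only places needing care are the transposes in the trace identity of Step 1 (confirming that the gradient formula \eqref{eq:matrix-true-grad} is consistent with the inner product $\inner{\YB}{\XB}=\Tr{\YB^\top\XB}$) and verifying that $\PB^\top\e_i$ has unit norm.

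If tightness matters later (that is, showing $L_*$ cannot be taken smaller than $L$), we can restrict to diagonal perturbations $\mathbf{H}=\QB^\top\diag(\mathbf{h})\PB$. For these, $\mathcal{A}(\mathbf{H})=\mathbf{h}$ and $\norm{\mathbf{H}}_\op=\norm{\mathbf{h}}_\infty$, so the two norms coincide exactly. The lemma itself only requires the upper bound.
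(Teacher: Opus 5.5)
Your proposal is correct and follows the paper's proof essentially step for step. The paper also uses the cyclic trace identity to show that the Taylor residuals of $F$ and $f$ coincide, then applies $\ell_\infty$-smoothness of $f$. It then bounds each $|(\QB\Delta\PB^\top)_{ii}| = |\q_i^\top \Delta \tilde{\e}_i|$ by $\norm{\Delta}_{\op}$, using the unit vectors $\q_i=\QB^\top\e_i$ and $\tilde{\e}_i=\PB^\top\e_i$. Two things in your version are not in the paper but do no harm: phrasing everything through the linear map $\mathcal{A}$, which also shows that separability of $f$ is not needed, and the tightness remark on perturbations $\QB^\top\diag(\mathbf{h})\PB$.
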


\begin{proof}
    Let $\XB, \YB \in \R^{m \times n}$ be two arbitrary matrices, and define $\Delta = \YB - \XB$. We map these matrices to vectors $\x, \y \in \R^m$ by defining $\x_i = (\QB\XB\PB^\top)_{ii}$ and $\y_i = (\QB\YB\PB^\top)_{ii}$. Let their difference be $\boldsymbol{\delta} = \y - \x$, where $\delta_i = (\QB\Delta\PB^\top)_{ii}$. By definition, $F(\XB) = f(\x)$ and $F(\YB) = f(\y)$.
    
    First, we bridge the Frobenius inner product of the matrices with the Euclidean inner product of the vectors. Let $\SB = \diag(\nabla f(\x))$. Using~\eqref{eq:matrix-true-grad} and the cyclic property of the trace operator, we have:
    \begin{equation*}
    \begin{aligned}
        \inner{\nabla F(\XB)}{\Delta}_F &= \text{tr}\big((\QB^\top \SB \PB)^\top \Delta\big) \\
        &= \text{tr}\big(\PB^\top \SB^\top \QB \Delta\big) \\
        &= \text{tr}\big(\SB^\top (\QB \Delta \PB^\top)\big).
    \end{aligned}
    \end{equation*}
    Let $\MB = \QB \Delta \PB^\top \in \R^{m \times m}$. Since $\SB$ is diagonal, the trace $\text{tr}(\SB^\top \MB)$ perfectly reduces to the element-wise sum of their main diagonals:
    \begin{equation*}
        \text{tr}\big(\SB^\top \MB\big) = \sum_{i=1}^m \SB_{ii} \MB_{ii} = \sum_{i=1}^m \nabla f_i(\x_i) \delta_i = \inner{\nabla f(\x)}{\y - \x}.
    \end{equation*}
    Therefore, the Taylor expansion residuals in both domains are strictly identical:
    \begin{equation}\label{eq:residual_equiv}
        F(\YB) - \left(F(\XB) + \inner{\nabla F(\XB)}{\YB-\XB}_F\right) = f(\y) - \left(f(\x) + \inner{\nabla f(\x)}{\y-\x}\right).
    \end{equation}
    Taking the absolute value and applying the $\Linf$-smoothness of $f(\x)$, we obtain:
    \begin{equation}\label{eq:smoothness_bound}
        \abs{F(\YB) - F(\XB) - \inner{\nabla F(\XB)}{\YB-\XB}_F} \le \dfrac{L}{2} \norm{\y - \x}^2_\infty.
    \end{equation}
    To bound the vector $\ell_\infty$-norm using the matrix spectral norm, we examine the maximum component of $\boldsymbol{\delta}$. For any $i \in \{1, \dots, m\}$:
    \begin{equation*}
        \abs{\y_i - \x_i} = \abs{(\QB\Delta\PB^\top)_{ii}} = \abs{\e_i^\top \QB \Delta \PB^\top \e_i} = \abs{\q_i^\top \Delta \tilde{\e}_i},
    \end{equation*}
    where $\e_i \in \R^m$ is the standard basis vector, $\q_i = \QB^\top \e_i \in \R^m$, and $\tilde{\e}_i = \PB^\top \e_i \in \R^n$. Since $\QB$ is orthogonal and $\PB$ is a standard projection, both vectors preserve their unit Euclidean lengths: $\norm{\q_i}_2 = 1$ and $\norm{\tilde{\e}_i}_2 = 1$. By the definition of the induced matrix 2-norm (spectral norm), we have:
    \begin{equation*}
        \abs{\q_i^\top \Delta \tilde{\e}_i} \le \norm{\q_i}_2 \norm{\Delta}_{\op} \norm{\tilde{\e}_i}_2 = \norm{\YB - \XB}_{\op}.
    \end{equation*}
    Taking the maximum over all $i$ yields $\norm{\y - \x}_\infty \le \norm{\YB - \XB}_{\op}$. Substituting this into~\eqref{eq:smoothness_bound} establishes that $F(\WB)$ is $L_*$-spectral norm smooth with $L_* = L$.
\end{proof}

\begin{proof}[Proof of \cref{thm:muon_lower_infty}]
    We now demonstrate that under this construction, the optimization trajectory of Muon in the matrix space exactly replicates that of SignSGD in the vector space.
    
    Recall the constructed stochastic matrix gradient is $\Gb_t = \QB^\top \diag(\g_t) \PB$. The Muon optimizer updates the weight matrix $\WB$ by moving in the direction of the orthogonalized gradient, computed via the thin SVD: $\Gb_t = \U \vLambda \V^\top$, where the update direction is $\msign{\Gb_t} = \U \V^\top$.
    
    We can explicitly write the SVD of our constructed $\Gb_t$. Let $\SB = \diag(\sign{\g_t})$ and $\vLambda = \diag(|\g_t|)$. We have:
    \begin{equation*}
        \Gb_t = \QB^\top \SB \vLambda \PB = \big(\QB^\top \diag(\sign{\g_t})\big) \diag(|\g_t|) \PB.
    \end{equation*}
    Notice that $\QB^\top \diag(\sign{\g_t})$ is an $m \times m$ matrix with orthonormal columns (since both $\QB^\top$ and $\diag(\pm 1)$ are orthogonal), $\diag(|\g_t|)$ is an $m \times m$ non-negative diagonal matrix, and $\PB$ is an $m \times n$ matrix with orthonormal rows. This strictly matches the definition of the thin SVD. Thus, the orthogonal factor is uniquely identified as:
    \begin{equation*}
        \msign{\Gb_t} = \U \V^\top = \QB^\top \diag(\sign{\g_t}) \PB.
    \end{equation*}
    The Muon update rule $\WB_{t+1} = \WB_t - \eta \, \msign{\Gb_t}$ then translates to:
    \begin{equation*}
        \WB_{t+1} = \WB_t - \eta \QB^\top \diag(\sign{\g_t}) \PB.
    \end{equation*}
    Left-multiplying by $\QB$ and right-multiplying by $\PB^\top$, we obtain:
    \begin{equation*}
        \QB\WB_{t+1}\PB^\top = \QB\WB_t\PB^\top - \eta \diag(\sign{\g_t}).
    \end{equation*}
    By isolating the diagonal entries $\x_{t,i} = (\QB\WB_t\PB^\top)_{ii}$, we recover the exact SignSGD coordinate-wise update:
    \begin{equation*}
        \x_{t+1} = \x_t - \eta \sign{\g_t}.
    \end{equation*}
    This establishes a strict dynamic isomorphism: the sequence of the extracted diagonal entries perfectly replicates the trajectory of SignSGD running on the vector function $f(\x)$.
    
    By \cref{lem:variance_equivalance} and \cref{lem:vector-matrix-smoothness_equivalance}, $F(\WB)$ satisfies \cref{ass:Matrix-variance} with noise covariance $\vSigma = \QB^\top \diag(\vsigma) \QB$ and \cref{ass:Spectral norm smooth} with $L_* = \Linf$. Furthermore, the trace norm (nuclear norm) of the noise covariance matrix strictly aligns: $\norm{\vSigma}_* = \text{tr}(\vSigma) = \norm{\vsigma}_1$.
    
    Finally, observe the true gradient $\nabla F(\WB_t) = \QB^\top \diag(\nabla f(\x_t)) \PB$. Because orthogonal transformations preserve singular values, the singular values of $\nabla F(\WB_t)$ are exactly the absolute values of the components of $\nabla f(\x_t)$. Thus, the nuclear norm of the matrix gradient equals the $\ell_1$-norm of the vector gradient:
    \begin{equation*}
        \norm{\nabla F(\WB_t)}_* = \norm{\nabla f(\x_t)}_1.
    \end{equation*}
    Because the optimization trajectories are identical and all bounding metrics perfectly correspond, we can directly invoke the vector lower bound from \cref{thm:ssgd_lower_infty} to establish the matrix lower bound for Muon:
    \begin{equation*}
        \expect{\min_{t} \norm{\nabla F(\WB_t)}_*} = \Omega\brac{\sqrt{\dfrac{L_*\Delta}{N}} + \brac{\dfrac{\norm{\vSigma}_*^2 L_*\Delta}{N}}^{\frac{1}{4}}  }.
    \end{equation*}
    This completes the proof.
\end{proof}

\section{Experimental Details}
\label{app:experimental_details}

In this section, we will present the omitted details for experiments. 

\begin{figure}[htbp]
    \centering
    \begin{subfigure}[b]{0.45\textwidth}
        \centering
        \includegraphics[width=\textwidth]{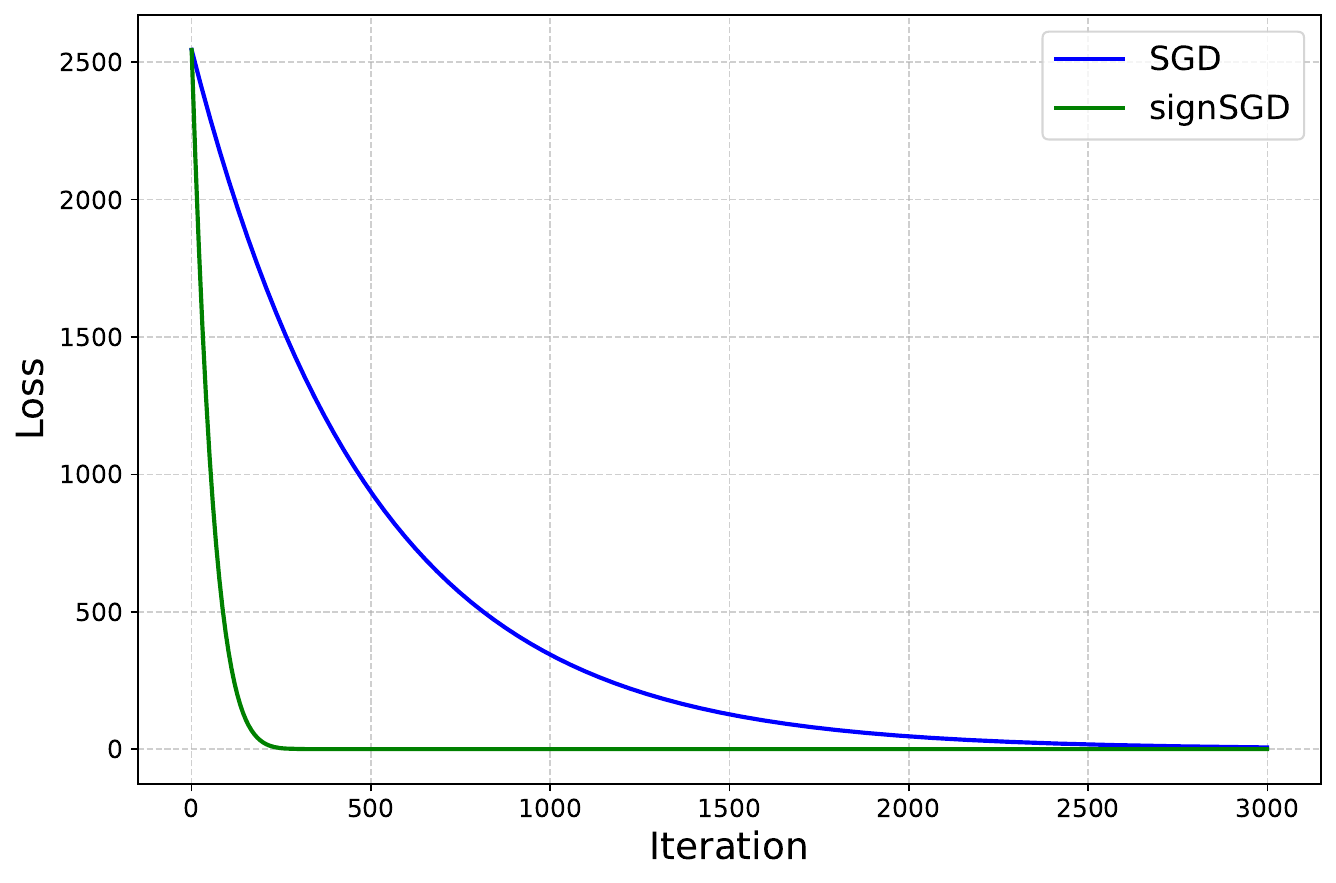}
        \caption{Deterministic case with anisotropic curvature.}
        \label{fig:deter}
    \end{subfigure}
    \hfill
    \begin{subfigure}[b]{0.45\textwidth}
        \centering
        \includegraphics[width=\textwidth]{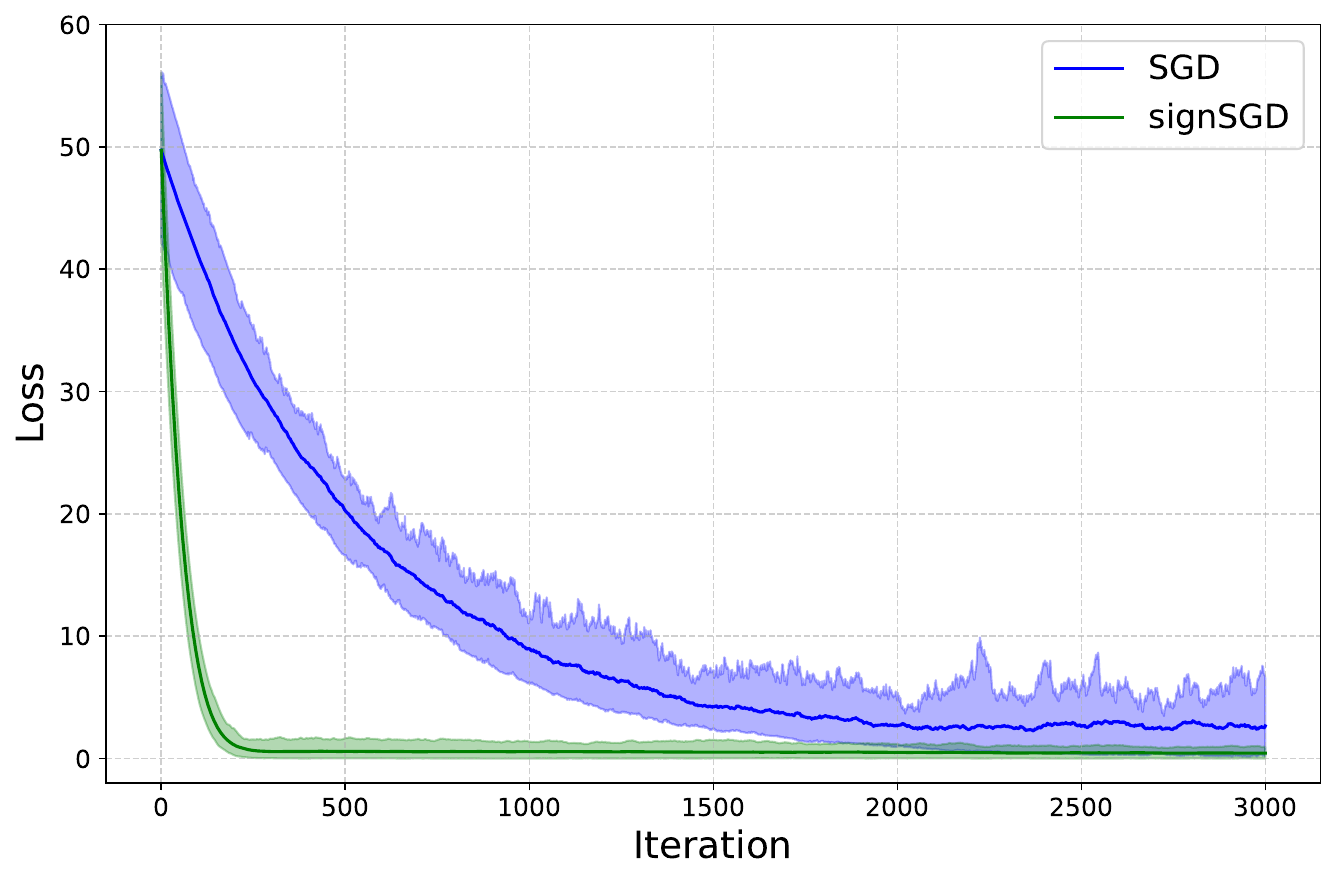}
        \caption{Stochastic example with sparse noise.}
        \label{fig:sto}
    \end{subfigure}
    \caption{Empirical toy problems demonstrating settings where SignSGD converges provably faster than SGD. \cref{fig:deter} illustrates the deterministic convergence on an imbalanced quadratic objective $f(x) = \sum_{i=1}^{d} L_i \x_i^2/2$ for $x \in \mathbb{R}^{5000}$, where the curvature is dominated by a single dimension ($L_1 = 1000$, $L_{i>1} = 1$). This highly skewed $\ell_\infty$-smooth geometry severely bottlenecks the learning rate of SGD, whereas SignSGD maintains steady convergence. \cref{fig:sto} demonstrates the training loss curve on a simple isotropic quadratic objective $f(x) = \norm{\x}^2/2$ for $\x \in \mathbb{R}^{100}$, where we inject Gaussian noise $\mathcal{N}(0, 100^2)$ exclusively into the first gradient component to simulate extreme noise sparsity. }
    \label{fig:sign_vs_sgd_toy}
\end{figure}

\subsection{Experimental Details for LLMs}

Here, we present the omitted details for the experiments in \cref{fig:train_loss,fig:val_loss,fig:density_llm}. We conduct all experiments using PyTorch and Distributed Data Parallel (DDP) across four NVIDIA Pro 6000 GPUs (96GB VRAM each). Our results are based on the codebase provided by~\citet{semenov2025benchmarking}, which can be found at \url{https://github.com/epfml/llm-optimizer-benchmark/tree/main/scripts}. The GPT2-small model is trained for 10k steps with a global batch size of 512 sequences, and we use a standard sequence length of 512, thus totaling approximately $1\times$ the Chinchilla-optimal token count as suggested by~\citet{hoffmann2022chinchilla}. The best learning rates of SignSGD and SGD are found via grid search in $\{1e-2, 1e-3, 1e-4\}$. We employ a linear warm-up period of $10\%$ total iterations at the start of pretraining. 

\subsection{Experimental Details for CNNs}

Here, we present the omitted details for the experiments on \cref{fig:density_cnn}. We followed the codebase in \citet{bernstein2018signsgd}, which can be found at \url{https://github.com/jxbz/signSGD}. To ensure a fully standard and exact mathematical comparison with our LLM framework, we ported their precise architectural definitions to modern PyTorch and implemented a fully distributed variance-tracking system using PyTorch's Distributed Data Parallel (DDP). Then, we conduct all experiments across four NVIDIA Pro 6000 GPUs (96GB VRAM each).

\end{document}